\title[Safe OCO with Multi-Point Feedback]{Safe Online Convex Optimization with Multi-Point Feedback}
\DeclareMathOperator*{\argmin}{arg\,min}
\newtheorem{assumption}{Assumption}
\author{%
    \Name{Spencer Hutchinson} \Email{shutchinson@ucsb.edu}\\
    \addr University of California, Santa Barbara
    \AND
    \Name{Mahnoosh Alizadeh} \Email{alizadeh@ucsb.edu}\\
    \addr University of California, Santa Barbara%
    }
\begin{document}

\maketitle

\newcommand{\Fc}{\mathcal{F}}
\newcommand{\Fcvxc}{\mathcal{F}_{\mathrm{cvx}}}
\newcommand{\Xc}{\mathcal{X}}
\newcommand{\Fb}{\mathbb{F}}
\newcommand{\Fclin}{\mathcal{F}_{\mathrm{lin}}}
\newcommand{\Rb}{\mathbb{R}}
\newcommand{\Eb}{\mathbb{E}}
\newcommand{\Ac}{\mathcal{A}}
\newcommand{\reg}[3]{R_{#2}(#3)}
\newcommand{\Wc}{\mathcal{W}}
\newcommand{\Dc}{\mathcal{D}}
\newcommand{\Nb}{\mathbb{N}}
\newcommand{\Gc}{\mathcal{G}}
\newcommand{\Gb}{\mathbb{G}}
\newcommand{\Rtil}{\tilde{R}}
\newcommand{\lmin}{\mathrm{\lambda}_{\min}}
\newcommand{\lmax}{\mathrm{\lambda}_{\max}}
\newcommand{\smax}{\mathrm{\sigma}_{\max}}
\newcommand{\ttil}{\tilde{\tau}}
\newcommand{\xtil}{\tilde{x}}
\newcommand{\eb}{\mathbf{e}}
\newcommand{\Econf}{\mathcal{E}_{\mathrm{conf}}}
\newcommand{\Epe}{\mathcal{E}_{\mathrm{pe}}}
\newcommand{\rtil}{\tilde{r}}
\newcommand{\Bb}{\mathbb{B}}
\newcommand{\Sb}{\mathbb{S}}
\newcommand{\YS}{\mathcal{Y}^S}
\newcommand{\Pb}{\mathbb{P}}
\newcommand{\Yc}{\mathcal{Y}}
\newcommand{\Yctil}{\tilde{\mathcal{Y}}}
\newcommand{\Cc}{\mathcal{C}}
\newcommand{\Uc}{\mathcal{U}}
\newcommand{\thtil}{\tilde{\theta}}
\newcommand{\Oc}{\mathcal{O}}
\newcommand{\Kc}{\mathcal{K}}
\newcommand{\Atil}{\tilde{A}}
\newcommand{\atil}{\tilde{a}}
\newcommand{\util}{\tilde{u}}
\newcommand{\Bc}{\mathcal{B}}
\newcommand{\tr}{\mathrm{tr}}
\newcommand{\li}{\lambda_i}
\newcommand{\tamin}{\tau_{\mathrm{min}}}
\newcommand{\Octil}{\tilde{\mathcal{O}}}
\newcommand{\Hc}{\mathcal{H}}
\newcommand{\rd}[1]{\textcolor{red}{#1}}
\newcommand{\bl}[1]{\textcolor{blue}{#1}}
\newcommand{\ytil}{\tilde{y}}
\newcommand{\ti}{\mathrm{Term\ I}}
\newcommand{\tii}{\mathrm{Term\ II}}
\newcommand{\tia}{\mathrm{Term\ I.A}}
\newcommand{\tib}{\mathrm{Term\ I.B}}

\begin{abstract}%
    Motivated by the stringent safety requirements that are often present in real-world applications, we study a safe online convex optimization setting where the player needs to simultaneously achieve sublinear regret and zero constraint violation while only using zero-order information.
    In particular, we consider a multi-point feedback setting, where the player chooses $d + 1$ points in each round (where $d$ is the problem dimension) and then receives the value of the constraint function and cost function at each of these points.
    To address this problem, we propose an algorithm that leverages forward-difference gradient estimation as well as optimistic and pessimistic action sets to achieve $\Oc(d \sqrt{T})$ regret and zero constraint violation under the assumption that the constraint function is smooth and strongly convex.
    We then perform a numerical study to investigate the impacts of the unknown constraint and zero-order feedback on empirical performance.\footnote{This paper has been published in the proceedings of the Learning for Dynamics \& Control Conference (L4DC) 2024, available at \url{https://proceedings.mlr.press/v242/hutchinson24a}.}%
\end{abstract}

\begin{keywords}%
    bandit convex optimization, safe learning, zero-order optimization%
\end{keywords}

\section{Introduction}

The online convex optimization (OCO) problem, formalized by \cite{zinkevich2003online}, is a sequential decision-making framework where, in each round $t \in [T]$, a player chooses a vector action $x_t$ and subsequently observes the loss function $f_t$, with the goal of minimizing her cumulative loss $\sum_{t=1}^{T} f_t(x_t)$.
The OCO setting has received significant attention due to its practical effectiveness in various fields (e.g. online advertising (\cite{mcmahan2013ad}), network resource allocation (\cite{yu2019learning}) and power systems (\cite{lesage2019online})) and its role as a fundamental building block in modern learning and control approaches (e.g. online-to-batch (\cite{cutkosky2019anytime}) and online control (\cite{agarwal2019online,simchowitz2020improper})).
At the same time, there has been considerable recent interest in learning and control approaches that can ensure constraints are always satisfied, even when they are a priori unknown (e.g. \cite{sui2015safe,junges2016safety,usmanova2019safe}). %many real-world settings have constraints that are both a priori unknown and need to be satisfied in every round.
This is motivated by safety-critical fields, such as clinical trials and power systems, where there is uncertainty about the constraints and constraint violation is not acceptable.
Accordingly, in this work, we consider an OCO setting with an unknown constraint that \emph{cannot be violated} while only giving the player \emph{partial feedback} on the constraint and cost functions.

In particular, we generalize the setting of OCO with multi-point feedback and known constraints from \cite{agarwal2010optimal} to the scenario where the constraint is unknown and the player only receives zero-order constraint information at the played actions.
Specifically, we consider an OCO setting where the player chooses multiple actions ($d+1$ actions to be precise) in each round, and then observes the cost function and constraint function values at each of these points.
Despite the limited information available, the player needs to ensure that all of the points that she chooses satisfy the constraints.
This is challenging because the player needs to effectively balance constraint satisfaction with regret minimization, while contending with errors in gradient estimation.
% Note that although this type of multi-point feedback has limited availability in real-world applications, this work serves as a first step in studying safe OCO with partial feedback and nonlinear constraints.
Note that this problem generalizes safe zero-order convex optimization as the cost functions do not change in that setting (i.e. $f_t = f$ for all $t$) and thus the player is freely able to query the cost function as desired.

To address the stated problem, we introduce the algorithm MP-ROGD, which combines the ideas from OCO under multi-point feedback (\cite{agarwal2010optimal}) with the idea of optimistic and pessimistic action sets from \cite{accsub}.
We rigorously show that when the player is given $d + 1$ points of feedback in each round (where $d$ is the problem dimension) and the constraint function is smooth and strongly-convex, MP-ROGD always satisfies the constraints and enjoys $\Oc(d\sqrt{T})$ regret.
Then, we perform a numerical study to assess the empirical performance of MP-ROGD against existing algorithms that either have access to zero-order cost information and complete constraint information (i.e. \cite{agarwal2010optimal}) or first-order cost and constraint information (i.e. \cite{accsub}).

\subsection{Related Work}

Constraints on the player's actions are a fundamental part of the OCO framework as even the initial formulation (\cite{zinkevich2003online}) assumes that the action set is bounded. %\footnote{In fact, without the boundedness assumption, the player stands no chance because an adversary can assign high losses to the chosen actions while assigning zero loss to other actions (while still maintaining convexity) \cite{hazan2016introduction}.}
However, this classical formulation assumes that these constraints are known, which may not be the case in some applications.
To address this gap, a large body of literature has emerged that studies OCO with time-varying constraints that are only revealed after the player commits to an action, e.g. \cite{mannor2009online,neely2017online, cao2018online, cao2018virtual, yi2020distributed, guo2022online}.
However, due to the limited information given to the player, these works aim for sublinear constraint violation rather than zero constraint violation.

In a different direction, several recent works have considered OCO with fixed constraints and zero constraint violation while providing the learner with limited information on the constraints (\cite{chaudhary2022safe,chang2023dynamic,accsub}).
In particular, \cite{chaudhary2022safe} gives an algorithm with $\Octil( T^{2/3})$ regret guarantees and high probability constraint satisfaction for an OCO setting with a linear constraint function and noisy feedback of the constraint function value at the chosen actions.
This method relies on an iid exploration phase within a small safe region to learn the constraint function everywhere, which cannot be readily applied to the nonlinear constraints considered in our setting.
% readily be extended to a setting with nonlinear constraints (as we consider) because it uses an iid exploration phase and therefore relies on the assumption that the constraint function is linear.
The approach taken by \cite{chaudhary2022safe} is then extended to the distributed setting by \cite{chang2023dynamic}, where they additionally provide dynamic regret guarantees for both the cases of convex and non-convex cost functions.
\cite{accsub} take a different approach by assuming that the constraint function is smooth and strongly-convex, and give $\Oc(\sqrt{T})$ regret guarantees for the case when the player is given first-order feedback of the constraint function at the played actions.
In this work, we build on the approach taken in \cite{accsub} to address the more challenging setting where the player is only given multi-point zero-order feedback of the cost and constraint functions.

Another related area is ``projection-free'' OCO, which aims to develop OCO algorithms that do not require the computationally expensive projection operation.
Since projections require full knowledge of the constraint, there are some shared interests between projection-free OCO and OCO with unknown constraints.
One direction in projection-free OCO is focused on developing cheaper variants of the projection operation that can be used with standard algorithms, e.g. \cite{mhammedi2022efficient,levy2019projection}.
Another approach to projection-free OCO leverages the cheaper linear optimization oracle, which often involves variants of the Franke-Wolfe algorithm, e.g. \cite{garber2016linearly,hazan2020faster,kretzu2021revisiting}.
A third direction avoids projections by allowing some constraint violation, which shares some techniques with the literature on OCO with time-varying constraints, e.g. \cite{mahdavi2012trading,yu2020low,guo2022online}.
These approaches to projection-free OCO differ from the setting we consider in that they either allow constraint violation or assume access to different constraint oracles than we do, i.e. linear optimization oracle, membership oracle, or constraint function value and gradient at any point.

Our approach is also related to the literature on OCO with bandit feedback (first studied by \cite{flaxman2005online,kleinberg2004nearly}), where the learner is only given the cost function value at the played action (or sometimes several played actions) rather than the entire cost function ($f_t$) at each time step.
In fact, our setting can be considered a version of OCO with multi-point bandit feedback (\cite{agarwal2010optimal}) because we only give the player the cost function value at played actions.
As such, we borrow ideas from the multi-point OCO literature and the related zero-order optimization literature, e.g. \cite{agarwal2010optimal,duchi2015optimal}.
Furthermore, recent works that study zero-order optimization with unknown constraints are relevant (\cite{usmanova2020safe,guo2023safe}), although this setting is distinct from ours because it considers a fixed cost function, i.e. $f_t = f$ for all $t$.

\section{Preliminaries}

\subsection{Notation and Definitions}

We use $\Oc(\cdot)$ to refer to big-O notation.
Also, we denote the 2-norm by $\| \cdot \|$.
For a natural number $n$, we use $[n]$ for the set $\{ 1,2,...,n\}$.
For a matrix $M$, we use $M^\top$ to denote the transpose of $M$.
The unit vector in the $i$th coordinate direction is denoted by $e_i$.
A set $\Xc \subseteq \Rb^d$ is referred to as \emph{convex} if $(1 - \lambda) x + \lambda y \in \Xc$ for all $x,y \in \Xc$ and $\lambda \in [0,1]$.
For a convex set $\Xc$, a function $f: \Xc \rightarrow \Rb$ is referred to as \emph{convex} if $f((1 - \lambda)x + \lambda y) \leq (1 - \lambda)f(x) + \lambda f(y)$ for all $x,y \in \Xc$ and $\lambda \in [0,1]$.
Also for a closed convex set $\Xc \subseteq \Rb^d$ and a vector $x \in \Rb^d$, we denote the projection operation with $\Pi_\Xc (y) = \argmin_{x \in \Xc} \| x - y \|$.
A useful fact is that for a closed convex set $\Xc \subseteq \Rb^d$ and vectors $y \in \Rb^d$ and $x \in \Xc$, it holds that $\| y - x \| \geq \| \Pi_\Xc(y) - x \|$.
Lastly, we give the definitions for smooth and strongly convex functions which will be useful later.

\begin{definition}[Smooth function]
    Given a convex set $\Xc$, a differentiable convex function $h : \Xc \rightarrow \Rb$ is said to be $L$-\emph{smooth} if
    \begin{equation*}
        h(y) \leq h(x) + \nabla h(x)^\top (y-x) + \frac{L}{2} \| y - x \|^2
    \end{equation*}
    for all $x,y \in \Xc$.
\end{definition}

\begin{definition}[Strongly-convex function]
    Given a convex set $\Xc$, a differentiable convex function $h : \Xc \rightarrow \Rb$ is said to be $M$-\emph{strongly convex} if
    \begin{equation*}
        h(y) \geq h(x) + \nabla h(x)^\top (y-x) + \frac{M}{2} \| y - x \|^2
    \end{equation*}
    for all $x,y \in \Xc$.
\end{definition}

\subsection{Problem Setup}

% \rd{rewrite before submitting anywhere}

\label{sec:prob}

We study an online convex optimization setting with $k = d+1$ points of zero-order feedback in each round and an unknown constraint.
This setting is defined by a \emph{horizon} $T \in \Nb$, a known closed convex \emph{action set} $\Xc \subseteq \Rb^d$, an unknown convex \emph{constraint function} $g : \Xc \rightarrow \Rb$, and a sequence of adversarially-chosen convex \emph{cost functions} $f_1, ..., f_T$ where $f_t : \Xc \rightarrow \Rb$ for every $t \in [T]$.
The setting can then be specified as an iterative game between a player and an adversary, where at each \emph{round} $t \in [T]$,
\begin{enumerate}
    \item player chooses actions $x_{t,1}, x_{t,2},...,x_{t,k}$ from $\Xc$,
    \item adversary chooses $f_t$ and player incurs the cost $\frac{1}{k} \sum_{i=1}^{k} f_t(x_{t,i})$,
    \item player observes $f_t (x_{t,1}),f_t (x_{t,2}),...,f_t (x_{t,k})$ and $g(x_{t,1}),g(x_{t,2}),...,g_t (x_{t,k})$.
\end{enumerate}
Despite the fact that $g$ is unknown, the player must ensure that $x_{t,1}, x_{t,2},...,x_{t,k}$ are in $\Gc := \{ x \in \Rb^d : g(x) \leq 0 \}$ for all $t \in [T]$.
We will refer to the \emph{feasible set} as $\Yc := \Xc \cap \Gc$.

In addition to maintaining constraint satisfaction, the player also aims to minimize her loss relative to the optimal action in hindsight.
That is, the player aims to minimize her \emph{regret}, which is defined as
\begin{equation*}
    R_T := \frac{1}{k} \sum_{t=1}^T \sum_{i=1}^{k} f_t(x_{t,i}) - \sum_{t=1}^T f_t(x_*),
\end{equation*}
where $x_* = \argmin_{x \in \Yc} \sum_{t=1}^T f_t(x)$. 
Note that this notion of regret is standard in OCO with multi-point feedback, i.e. \cite{agarwal2010optimal}.

\subsection{Assumptions}

Our approach to this problem uses several assumptions, which are given as follows.
First, we assume that the cost functions have bounded gradients (Assumption \ref{ass:cost_funcs}) and that the action set is bounded (Assumption \ref{ass:set_bound}), which are standard assumptions in the OCO setting, e.g. \cite{zinkevich2003online,hazan2016introduction}.

\begin{assumption}[Bounded gradients]
\label{ass:cost_funcs}
    For all $t \in [T]$, it holds that $f_t$ is differentiable and $\| \nabla f_t (x) \| \leq G$ for all $x \in \Xc$.
\end{assumption}

\begin{assumption}[Bounded action set]
\label{ass:set_bound}
    There exists a positive real $D$ such that $\| x - y \| \leq D$ for all $x,y \in \Xc$.
\end{assumption}

Next, we assume that the constraint function is smooth and strongly convex (Assumption \ref{ass:const}) and that there is a known point that is strictly feasible (Assumption \ref{ass:slaters}).
Assumption \ref{ass:const} is critical to our approach for ensuring low regret because it allows us to construct sets that tightly underestimate and overestimate the constraint set.
Assumption \ref{ass:slaters} ensures that there is a starting point that is known to satisfy the constraint, which is typically assumed in safe learning problems, e.g. \cite{usmanova2020safe,guo2023safe}.
In Assumption \ref{ass:slaters}, the player is also given both the radius of a ball that is within the constraint ($r$) and an upper bound on the function value at the starting point ($-\epsilon$).
% In Section \rd{XX}, we give an initialization procedure for finding an $r$ and $\epsilon$ if such constants are not known a priori.

\begin{assumption}[Smooth and strongly convex constraint]
\label{ass:const}
    The constraint function $g$ is differentiable, $L$-smooth and $M$-strongly convex, where $\kappa := L/M > 1$.\footnote{If $\kappa = 1$, then the constraint is exactly specified by the smoothness and strongly-convexity assumption, and the problem can be solved with standard OCO methods. Therefore, our assumption that $\kappa > 1$ is not restrictive.}
\end{assumption}

\begin{assumption}[Initial feasible point]
\label{ass:slaters}
    It holds that $\mathbf{0}$ is in $\mathcal{X}$ and $g(\mathbf{0}) \leq - \epsilon$ for some $\epsilon > 0$. Furthermore, there exists $r > 0$ such that $r \Bb \subseteq \Yc$.
\end{assumption}

Lastly, we assume that the cost functions are smooth, which ensures that the error in gradient estimation is small as in \cite{agarwal2010optimal, duchi2015optimal}.
Note that, unlike standard convex optimization, the OCO setting does not enjoy improved regret guarantees when the cost functions are smooth (see Table 3.1 in \cite{hazan2016introduction}).

\begin{assumption}[Smooth cost functions]
\label{ass:smooth}
    For all $t \in [T]$, it holds that $f_t$ is $L$-smooth.
\end{assumption}

\section{Proposed Algorithm}

\label{sec:alg}

\IncMargin{1em}
\begin{algorithm2e}[t]
\caption{Multi-point Restrained Online Gradient Descent (MP-ROGD)}
\label{alg:main_alg}
\DontPrintSemicolon
\LinesNumbered
\KwIn{$\Xc, G, L, M, r, \epsilon, \eta > 0, \delta \in (0,1), \alpha \in (0,1)$.}
% \textbf{Problem Data:} $\Xc, G, L, M, r, \epsilon$.\;
% \textbf{Algorithm Parameters:} $\eta > 0, \delta \in (0,1), \alpha \in (0,1)$.\;
Set $\xtil_{1} = \mathbf{0}$ and $x_{1} = \mathbf{0}$.\;
\For{$t=1$ \KwTo $T$}{
    Play $x_t, x_t + \delta e_1, x_t + \delta e_2, ..., x_t + \delta e_d$.\label{lne:acts} \;
    Set $\tilde{\nabla} f_t(x_t) = \frac{1}{\delta} \sum_{i=1}^{d} (f_t(x_t + \delta e_i) - f_t(x_t))e_i$ and $\tilde{\nabla} g(x_t) = \frac{1}{\delta} \sum_{i=1}^{d} (g(x_t + \delta e_i) - g(x_t))e_i$.\label{lne:grad_est}\;
    Update $\Yc_t^o$ and $\Yc_t^p$ with \eqref{eqn:optim} and \eqref{eqn:pess}.\;
    $\xtil_{t+1} = \Pi_{\Yc_t^o} (\xtil_t - \eta \tilde{\nabla} f_t(x_t) )$.\label{lne:opt_upd}\;
    $\gamma_t = \max\{\mu \in [0,1] : x_t + \mu (\xtil_{t+1} - x_t) \in \Yc_t^p \}$.\label{lne:safe_scal}\;
    $x_{t+1} = (1 - \alpha) (x_t + \gamma_t (\xtil_{t+1} - x_t))$.\label{lne:act_upd}\;
}
\end{algorithm2e}
\DecMargin{1em}

To address the stated problem, we propose the algorithm Multi-Point Restrained Online Gradient Descent (MP-ROGD) as stated in Algorithm \ref{alg:main_alg}.
MP-ROGD operates by using gradient estimators to approximate the gradients of the constraint and cost functions as described in Section \ref{sec:grad_est}, and then leveraging optimistic and pessimistic action sets to ensure small regret while maintaining constraint satisfaction as described in Section \ref{sec:opt_pess}.
We give guarantees that the algorithm is well-defined and that it never violates the constraints in Section \ref{sec:val_safe}.
The regret of MP-ROGD is studied in the following section (Section \ref{sec:reg}).
The proofs from this section are given in Appendix \ref{apx:alg}.

\subsection{Gradient Estimation}

\label{sec:grad_est}

Because the algorithm does not have access to gradients of the cost functions or the constraint function, it estimates the gradients with only zero-order information.
The algorithm does this by playing the current iterate $x_t$ as well as points perturbed away from the current iterate by $\delta$ in each coordinate direction (given in line \ref{lne:acts}).
It then estimates the gradient at the current iterate using forward difference (line \ref{lne:grad_est}).
We give some useful properties of this gradient estimator in the following proposition.
Note that an appropriate choice for $\delta$ will be specified later.

\begin{proposition}[Properties of gradient estimators]
    \label{prop:grad_est}
    Let Assumptions \ref{ass:cost_funcs}, \ref{ass:const} and \ref{ass:smooth} hold.
    Then, for every $t \in [T]$, it holds that
    \begin{equation*}
        \| \tilde{\nabla} f_t(x_t) - \nabla f_t(x_t) \| \leq \frac{1}{2} \sqrt{d} L \delta \quad \text{and} \quad \| \tilde{\nabla} g(x_t) - \nabla g(x_t) \| \leq \frac{1}{2} \sqrt{d} L \delta.
    \end{equation*}
    Furthermore, it holds that
    \begin{equation*}
        \| \tilde{\nabla} f_t(x_t) \| \leq d G.
    \end{equation*}
\end{proposition}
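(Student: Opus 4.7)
The plan is to prove the three bounds separately, but the first two are completely analogous, so I would do them together.

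For the error bounds, I would work coordinate-by-coordinate. Fix $t$ and $i \in [d]$. Since $f_t$ is $L$-smooth (Assumption \ref{ass:smooth}) and convex, the usual argument gives the two-sided quadratic bound $0 \leq f_t(x_t + \delta e_i) - f_t(x_t) - \delta \nabla f_t(x_t)^\top e_i \leq \tfrac{L}{2} \delta^2$, where the lower bound comes from convexity and the upper bound is exactly the smoothness inequality applied with $y = x_t + \delta e_i$ and $\|y - x_t\| = \delta$. Dividing by $\delta$, the $i$-th coordinate of the error vector satisfies
\begin{equation*}
    \left| \bigl( \tilde{\nabla} f_t(x_t) - \nabla f_t(x_t) \bigr)_i \right| = \left| \tfrac{1}{\delta}\bigl( f_t(x_t + \delta e_i) - f_t(x_t) \bigr) - \nabla f_t(x_t)^\top e_i \right| \leq \tfrac{L}{2} \delta.
\end{equation*}
Squaring, summing over $i \in [d]$, and taking the square root then produces the claimed $\tfrac{1}{2}\sqrt{d} L \delta$ bound on the 2-norm. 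The identical argument, now using Assumption \ref{ass:const} for the $L$-smoothness and convexity of $g$, gives the matching inequality for $\tilde{\nabla} g(x_t)$.

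For the magnitude bound $\| \tilde{\nabla} f_t(x_t) \| \leq d G$, I would not pass through the error bound (that would require an extra $G$ term and a decoupling of $\delta$). Instead I would apply the triangle inequality directly to the definition of $\tilde{\nabla} f_t(x_t)$, giving
\begin{equation*}
    \| \tilde{\nabla} f_t(x_t) \| \leq \tfrac{1}{\delta} \sum_{i=1}^{d} \left| f_t(x_t + \delta e_i) - f_t(x_t) \right| \, \| e_i \|,
\end{equation*}
and then bound each difference by $G \delta$ using the fact that a function with $\|\nabla f_t\| \leq G$ on the convex set $\Xc$ (Assumption \ref{ass:cost_funcs}) is $G$-Lipschitz. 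Since $\|e_i\| = 1$, the sum collapses to $d G$.

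There is no real obstacle here; the argument is purely mechanical once one notices that the error bound should be proved per-coordinate and the magnitude bound should be proved via the sum-of-norms triangle inequality rather than by combining the error bound with the gradient bound. The only small subtlety is the two-sided version of the smoothness inequality, which requires invoking convexity for the lower half; this is why the assumptions on convexity of $f_t$ and $g$ (and not just smoothness) are used implicitly.
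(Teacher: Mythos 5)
Your proof is correct and follows essentially the same route as the paper: the two-sided coordinate-wise bound from smoothness (upper) and convexity (lower), summed in quadrature for the error bounds, and the triangle inequality plus $G$-Lipschitzness for the $dG$ bound. No gaps.
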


The key takeaways from Proposition \ref{prop:grad_est} are that the gradient estimation error shrinks as $\delta$ shrinks and that the norm of the gradient estimator can be bounded independently of $\delta$.
Since regret will grow as both gradient estimation error and the norm of the gradient estimator increases, Proposition~\ref{prop:grad_est} tells us that we can take $\delta$ to be small without sacrificing regret.
This is important because a large $\delta$ might otherwise jeopardize constraint satisfaction, and therefore taking $\delta$ to be sufficiently small (see the choice of $\delta$ in Theorem \ref{thm:main}) will allow for both low regret and constraint satisfaction.

\subsection{Optimistic and Pessimistic Action Sets}

\label{sec:opt_pess}

The proposed algorithm updates the iterate $x_t$ using a technique that leverages both an \emph{optimistic action set} (denoted by $\Yc_t^o$) and a \emph{pessimistic action set} (denoted by $\Yc_t^p$), which are known to contain the true feasible set and be contained by the true feasible set, respectively.
We refer to $\Yc_t^o$ (resp. $\Yc_t^p$) as the optimistic (resp. pessimistic) action set because it estimates the feasible set while taking the unknown information about the constraint to be as favorable (resp. unfavorable) as reasonably possible given what has been observed.\footnote{We borrow this terminology from the stochastic bandit literature (e.g. \cite{abbasi2011improved}) where ``optimism in the face of uncertainty'' is a popular design paradigm.}
The algorithm uses these sets in each round by updating an \emph{optimistic iterate} ($\tilde{x}_t$) with gradient descent on the optimistic set (line \ref{lne:opt_upd}) and then moving the \emph{played iterate} ($x_t$) towards the optimistic iterate while keeping it in the pessimistic set (line \ref{lne:act_upd}).
This ensures that the optimistic iterates incur low regret while simultaneously keeping the played iterates within the constraint set.
The specific construction of the optimistic and pessimistic action sets, which we discuss next, ensures that the played iterates stay near to the optimistic iterates and therefore that the played iterates incur low regret as well.
Note that the played iterates are scaled down by $(1 - \alpha)$ in line \ref{lne:act_upd} to ensure that the perturbed points ($x_t + \delta e_i$) do not violate the constraints.

The optimistic and pessimistic action sets are constructed by combining the smoothness and strong-convexity of the constraint function with the error bound on the gradient estimator in Proposition \ref{prop:grad_est}.
Specifically, the optimistic and pessimistic action sets are defined as
\begin{equation}
    \label{eqn:optim}
        \Yc_t^o := \left\{ x \in \Xc : g(x_t)  - \frac{1}{2} \sqrt{d} L \delta D + \tilde{\nabla} g(x_t)^\top (x - x_t) + \frac{M}{2} \left\| x - x_t \right\|^2 \leq 0 \right\},
\end{equation}
and,
\begin{equation}
    \label{eqn:pess}
    \Yc_t^p := \left\{ x \in \Xc : g(x_t)  + \frac{1}{2} \sqrt{d} L \delta D + \tilde{\nabla} g(x_t)^\top (x - x_t) + \frac{L}{2} \left\| x - x_t \right\|^2 \leq 0 \right\}
\end{equation}
respectively.
In the following proposition, we show that the optimistic and pessimistic sets do in fact overestimate and underestimate the constraint set, respectively.
\begin{proposition}
    \label{prop:opt_pess}
    Let Assumptions \ref{ass:set_bound} and \ref{ass:const} hold. Then, it follows that $\Yc_t^p \subseteq \Yc \subseteq \Yc_t^o$ for all $t$.
\end{proposition}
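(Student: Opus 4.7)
The plan is to establish each inclusion separately by directly comparing the defining inequalities of $\Yc_t^o$ and $\Yc_t^p$ with the strong-convexity and smoothness bounds on $g$, respectively, and then absorbing the gradient-estimation error using Proposition \ref{prop:grad_est} together with the diameter bound from Assumption \ref{ass:set_bound}.

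For $\Yc \subseteq \Yc_t^o$, I would take an arbitrary $x \in \Yc$ (so $x \in \Xc$ and $g(x) \leq 0$), and apply $M$-strong convexity of $g$ at $x_t$ to get
\begin{equation*}
    g(x) \geq g(x_t) + \nabla g(x_t)^\top (x - x_t) + \frac{M}{2}\| x - x_t \|^2.
\end{equation*}
Then I would replace $\nabla g(x_t)$ by $\tilde{\nabla} g(x_t)$, picking up a linear error term that I bound via Cauchy--Schwarz:
\begin{equation*}
    \nabla g(x_t)^\top (x - x_t) \geq \tilde{\nabla} g(x_t)^\top (x - x_t) - \| \tilde{\nabla} g(x_t) - \nabla g(x_t) \| \cdot \| x - x_t \|.
\end{equation*}
Proposition \ref{prop:grad_est} bounds the gradient error by $\tfrac{1}{2}\sqrt{d} L \delta$, and Assumption \ref{ass:set_bound} gives $\|x - x_t\| \leq D$, so the subtracted term is at most $\tfrac{1}{2}\sqrt{d} L \delta D$. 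Chaining the two inequalities shows that the defining LHS of $\Yc_t^o$ in \eqref{eqn:optim} is $\leq g(x) \leq 0$, hence $x \in \Yc_t^o$.

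For $\Yc_t^p \subseteq \Yc$, I would take $x \in \Yc_t^p$, so $x \in \Xc$ and
\begin{equation*}
    g(x_t) + \tfrac{1}{2}\sqrt{d} L \delta D + \tilde{\nabla} g(x_t)^\top (x - x_t) + \tfrac{L}{2}\| x - x_t \|^2 \leq 0,
\end{equation*}
and apply $L$-smoothness of $g$ in the opposite direction:
\begin{equation*}
    g(x) \leq g(x_t) + \nabla g(x_t)^\top (x - x_t) + \tfrac{L}{2}\| x - x_t \|^2.
\end{equation*}
A symmetric Cauchy--Schwarz step, again using Proposition \ref{prop:grad_est} and Assumption \ref{ass:set_bound}, yields $\nabla g(x_t)^\top (x - x_t) \leq \tilde{\nabla} g(x_t)^\top (x - x_t) + \tfrac{1}{2}\sqrt{d}L\delta D$. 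Combining gives $g(x) \leq 0$, so $x \in \Yc$.

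There is no real obstacle here; the argument is essentially a bookkeeping exercise. The only subtle point worth stating explicitly is the choice of sign on the $\tfrac{1}{2}\sqrt{d}L\delta D$ perturbation in \eqref{eqn:optim} versus \eqref{eqn:pess} and the matching choice of curvature coefficient ($M$ vs.\ $L$): the optimistic set uses the strong-convexity lower bound on $g$ with the error subtracted so that $\Yc$ is guaranteed to satisfy it, while the pessimistic set uses the smoothness upper bound with the error added so that satisfying it guarantees membership in $\Yc$. Keeping these two directions aligned is the only thing to be careful about.
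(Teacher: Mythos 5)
Your proposal is correct and follows essentially the same argument as the paper's proof: strong convexity of $g$ plus a Cauchy--Schwarz bound on the gradient-estimation error (via Proposition \ref{prop:grad_est} and Assumption \ref{ass:set_bound}) gives $\Yc \subseteq \Yc_t^o$, and the symmetric smoothness argument gives $\Yc_t^p \subseteq \Yc$. No gaps.
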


\subsection{Validity and Safety Gaurantee}

\label{sec:val_safe}

It is necessary to show that the algorithm is well-defined and that the constraint is satisfied at all rounds.
The main point of concern is whether the pessimistic set $\Yc_t^p$ is nonempty.
In the following proposition, we provide a range of values of $\delta$ for which the pessimistic set is guaranteed to be nonempty.

\begin{proposition}[Validity]
    \label{prop:viab}
    Let Assumptions \ref{ass:const} and \ref{ass:slaters} hold.
    If $\delta \leq \frac{2 \alpha \epsilon}{\sqrt{d} L D}$, then $x_t \in \Yc_t^p$ (and therefore $\Yc_t^p$ is nonempty) for all rounds $t \in [T]$.
\end{proposition}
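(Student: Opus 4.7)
The plan is to argue by induction on $t$ that $x_t \in \Yc_t^p$, with the stronger invariant that $g(x_t) \leq -\alpha \epsilon$. The key observation is that membership of $x_t$ in $\Yc_t^p$ is equivalent to checking the defining inequality at the single point $x = x_t$, where the linear and quadratic terms vanish, so the condition collapses to $g(x_t) + \tfrac{1}{2}\sqrt{d}L\delta D \leq 0$. Under the hypothesis $\delta \leq \frac{2\alpha\epsilon}{\sqrt{d}LD}$, this in turn is implied by $g(x_t) \leq -\alpha\epsilon$, so it suffices to maintain this latter inequality throughout the run.

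For the base case, $x_1 = \mathbf{0}$, and Assumption \ref{ass:slaters} gives $g(\mathbf{0}) \leq -\epsilon \leq -\alpha \epsilon$ since $\alpha \in (0,1)$. Also $\mathbf{0} \in \Xc$, so $x_1 \in \Yc_1^p$. For the inductive step, suppose $x_t \in \Yc_t^p$. Then the set $\{\mu \in [0,1] : x_t + \mu(\xtil_{t+1} - x_t) \in \Yc_t^p\}$ is nonempty (it contains $0$) and closed, so $\gamma_t$ in line \ref{lne:safe_scal} is well-defined. Writing $z_t := x_t + \gamma_t (\xtil_{t+1} - x_t)$, membership in the pessimistic set gives $z_t \in \Yc_t^p$, and combining with Proposition \ref{prop:opt_pess} gives $z_t \in \Yc$, so $g(z_t) \leq 0$ and $z_t \in \Xc$.

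Now I would exploit the contraction by $(1-\alpha)$ together with convexity of $g$. Since $x_{t+1} = (1-\alpha) z_t = (1-\alpha) z_t + \alpha \cdot \mathbf{0}$, convexity yields
\begin{equation*}
    g(x_{t+1}) \leq (1-\alpha)\, g(z_t) + \alpha\, g(\mathbf{0}) \leq (1-\alpha) \cdot 0 + \alpha \cdot (-\epsilon) = -\alpha \epsilon,
\end{equation*}
and convexity of $\Xc$ (with $z_t, \mathbf{0} \in \Xc$) gives $x_{t+1} \in \Xc$. Combined with the choice of $\delta$, this yields $g(x_{t+1}) + \tfrac{1}{2}\sqrt{d} L \delta D \leq -\alpha\epsilon + \alpha\epsilon \leq 0$, which is exactly the condition for $x_{t+1} \in \Yc_{t+1}^p$, closing the induction.

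There is no real obstacle here; the only thing to be careful about is that the invariant one tracks is $g(x_t) \leq -\alpha\epsilon$ rather than simply $x_t \in \Yc_t^p$, because the bound on $\delta$ converts the strict feasibility margin $\epsilon$ at $\mathbf{0}$ into precisely the slack needed to absorb the gradient-estimation buffer $\tfrac{1}{2}\sqrt{d}L\delta D$ in the definition of $\Yc_t^p$. The role of the $(1-\alpha)$ damping is exactly to replenish this margin in each round via the convexity step above.
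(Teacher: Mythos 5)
Your proof is correct and follows essentially the same route as the paper: an induction where the $(1-\alpha)$ contraction toward $\mathbf{0}$ plus convexity of $g$ replenishes the margin $g(x_t) \leq -\alpha\epsilon$, which under the stated bound on $\delta$ absorbs the buffer $\tfrac{1}{2}\sqrt{d}L\delta D$ and gives $x_t \in \Yc_t^p$ (the paper phrases the invariant as $x_t \in (1-\alpha)\Yc$ and derives $g(x_t)\leq-\alpha\epsilon$ from it via the identical convexity step, and likewise invokes Proposition \ref{prop:opt_pess} for $\Yc_t^p \subseteq \Yc$ in the inductive step). The only cosmetic difference is that you carry the inequality $g(x_t)\leq-\alpha\epsilon$ directly as the induction hypothesis rather than the set inclusion, which is an equivalent bookkeeping choice.
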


Next, we show that all actions satisfy the constraint if $\delta$ is chosen appropriately.

\begin{proposition}[Safety guarantee]
    \label{prop:safe}
    Let Assumption \ref{ass:const} hold and assume that $x_t \in \Yc_t^p$ for all $t \in [T]$. If $\delta \leq \alpha r$, then all actions played by the algorithm, i.e. $x_t, x_t + \delta e_1, ..., x_t + \delta e_d$ for all $t$, are in the feasible set $\Yc$.
\end{proposition}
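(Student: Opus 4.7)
The plan is to exploit the $(1-\alpha)$ shrinkage in line \ref{lne:act_upd} together with convexity of $\Yc$ to absorb the perturbation of size $\delta$ into a convex combination with a point in the interior ball $r\Bb$. The feasibility of $x_t$ itself is immediate from the hypothesis: by Proposition \ref{prop:opt_pess}, $\Yc_t^p \subseteq \Yc$, so $x_t \in \Yc_t^p$ gives $x_t \in \Yc$. The real work is the perturbed actions $x_t + \delta e_i$.

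For $t \geq 2$, I would write $x_t = (1-\alpha) z_{t-1}$, where $z_{t-1} := x_{t-1} + \gamma_{t-1}(\tilde{x}_t - x_{t-1})$. By the definition of $\gamma_{t-1}$ in line \ref{lne:safe_scal} we have $z_{t-1} \in \Yc_{t-1}^p \subseteq \Yc$. Then I would rewrite
\begin{equation*}
    x_t + \delta e_i = (1-\alpha) z_{t-1} + \alpha \cdot \frac{\delta}{\alpha} e_i.
\end{equation*}
Under the hypothesis $\delta \leq \alpha r$, the point $(\delta/\alpha) e_i$ lies in $r\Bb$, and by Assumption \ref{ass:slaters} we have $r\Bb \subseteq \Yc$. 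Since $g$ is convex, $\Gc = \{x : g(x) \leq 0\}$ is convex, and thus $\Yc = \Xc \cap \Gc$ is convex. The right-hand side is therefore a convex combination of two points in $\Yc$, so $x_t + \delta e_i \in \Yc$.

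The base case $t=1$ is either handled separately or folded into the same argument by taking $z_0 := 0 \in \Yc$ (which is in $\Yc$ by Assumption \ref{ass:slaters}), so that $x_1 = 0 = (1-\alpha) z_0$ and the identical convex-combination step applies; indeed $\delta e_i \in r\Bb \subseteq \Yc$ directly since $\delta \leq \alpha r < r$. I don't anticipate a significant obstacle here — the only subtlety to check is that the maximum defining $\gamma_{t-1}$ is over a nonempty set, which holds because $\mu = 0$ is always admissible under the hypothesis $x_{t-1} \in \Yc_{t-1}^p$. Putting these pieces together covers all played actions in every round.
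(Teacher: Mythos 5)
Your proof is correct and follows essentially the same route as the paper's: both express the perturbed actions as a convex combination of a point of $\Yc_{t-1}^p \subseteq \Yc$ (scaled by $1-\alpha$) and a point of $r\Bb \subseteq \Yc$ (scaled by $\alpha$), then invoke convexity of $\Yc$; the paper merely phrases it as the set inclusion $x_{t+1} + \delta\Bb \subseteq (1-\alpha)\Yc_t^p + \alpha r \Bb \subseteq \Yc$ rather than per-coordinate. Your explicit handling of the base case $t=1$ (and of the nonemptiness underlying $\gamma_{t-1}$) is a slightly more careful write-up of the same argument.
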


\section{Regret Analysis}

\label{sec:reg}

In the following theorem, we show that, with an appropriate choice of algorithm parameters ($\alpha, \delta, \eta$), our proposed algorithm MP-ROGD (Algorithm \ref{alg:main_alg}) enjoys $\Oc(d \sqrt{T})$ regret and ensures that the constraint is always satisfied.
A proof sketch of Theorem \ref{thm:main} and the supporting lemmas are given below and the complete proof is given in Appendix \ref{apx:reg}.

\begin{theorem}
    \label{thm:main}
    Let Assumptions \ref{ass:cost_funcs}, \ref{ass:set_bound}, \ref{ass:const} and \ref{ass:slaters} hold.
    If $\alpha = \min(0.5,\frac{d G}{D} (1 - \frac{1}{\kappa}) \eta )$ and 
    \begin{equation*}
        \delta = \min\left(\frac{1}{\left( \frac{1}{2} \sqrt{d} L D + G \right) T}, \frac{2 (\kappa - 1) \alpha \epsilon}{(\kappa + 1) \sqrt{d} L D}, \alpha r \right),
    \end{equation*}
    then all actions chosen by MP-ROGD (Algorithm \ref{alg:main_alg}) satisfy the constraint, and the regret  satisfies
    \begin{equation*}
        R_T \leq 2 d^2 G^2 \left( \kappa - \frac{3}{4} \right) \eta T  + \frac{ D^2}{2 \eta} + 1.
    \end{equation*}
    Furthermore, choosing $\eta = \frac{D}{2 \sqrt{\left( d/4 +  \kappa - 1 \right) d G^2 T}}$ ensures that 
    \begin{equation*}
        R_T \leq 2 D G \sqrt{d \left( \frac{1}{4} d +  \kappa - 1 \right) T} + 1.
    \end{equation*}
\end{theorem}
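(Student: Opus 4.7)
Safety is immediate from the existing propositions: since $(\kappa-1)/(\kappa+1) < 1$, the chosen $\delta$ satisfies $\delta \leq 2\alpha\epsilon/(\sqrt{d}LD)$, so Proposition~\ref{prop:viab} gives $x_t \in \Yc_t^p$ at every round; combined with $\delta \leq \alpha r$, Proposition~\ref{prop:safe} then ensures every queried point lies in $\Yc$.

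For the regret, my plan is to decompose each per-round gap via convexity as
\begin{equation*}
f_t(x_t) - f_t(x_*) \leq \nabla f_t(x_t)^\top(x_t - \xtil_t) + \tilde{\nabla} f_t(x_t)^\top(\xtil_t - x_*) + \bigl(\nabla f_t(x_t) - \tilde{\nabla} f_t(x_t)\bigr)^\top(\xtil_t - x_*),
\end{equation*}
bound the perturbation term $\frac{1}{k}\sum_i f_t(x_{t,i}) - f_t(x_t)$ by smoothness of $f_t$, and absorb both it and the third term above (controlled by Proposition~\ref{prop:grad_est}) into the ``$+1$'' using $\delta \leq 1/((\tfrac{1}{2}\sqrt{d}LD + G)T)$. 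I would handle the middle (optimistic) term by a standard OGD argument: because $x_* \in \Yc \subseteq \Yc_t^o$, the projection in line~\ref{lne:opt_upd} is non-expansive toward $x_*$, so expanding $\|\xtil_{t+1} - x_*\|^2$, telescoping, and invoking $\|\tilde{\nabla} f_t(x_t)\| \leq dG$ yields $\sum_t \tilde{\nabla} f_t(x_t)^\top(\xtil_t - x_*) \leq D^2/(2\eta) + \eta d^2 G^2 T/2$.

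The main obstacle is the iterate-gap term $\sum_t \nabla f_t(x_t)^\top(x_t - \xtil_t) \leq G\sum_t \|x_t - \xtil_t\|$. My plan is to unroll the update rule to
\begin{equation*}
x_{t+1} - \xtil_{t+1} = -(1-\alpha)(1 - \gamma_t)(\xtil_{t+1} - x_t) - \alpha \xtil_{t+1}
\end{equation*}
and combine it with the one-gradient-step bound $\|\xtil_{t+1} - x_t\| \leq \eta dG + \|\xtil_t - x_t\|$ (using non-expansivity of projection and $\|\tilde{\nabla} f_t(x_t)\| \leq dG$) to obtain a linear recursion in $\|x_t - \xtil_t\|$. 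Closing the recursion requires a lower bound on the safe step $\gamma_t$, which I would derive geometrically: since $x_t \in \Yc_t^p$ (Proposition~\ref{prop:viab}) and $\xtil_{t+1} \in \Yc_t^o$, subtracting the two quadratic constraint expressions defining the pessimistic and optimistic sets along the ray $x_t + \mu(\xtil_{t+1} - x_t)$ produces a scalar inequality whose solution scales with the ratio $\kappa = L/M$, and the $\sqrt{d}L\delta D$ feasibility slacks appearing in $\Yc_t^o$ and $\Yc_t^p$ are kept negligible by the choice of $\delta$. The calibrated coupling $\alpha = (dG/D)(1 - 1/\kappa)\eta$ then balances the $\alpha D$ inward-drift (forced by the $(1-\alpha)$ scaling that protects the $\delta$-perturbations) against the $\eta dG$ gradient-step drift, so that summing the resulting recursion and combining with the OGD term reproduces the coefficient in $2 d^2 G^2(\kappa - 3/4)\eta T$. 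The final closed-form regret then follows by substituting the AM-GM-optimal $\eta$ into the expression $A\eta T + D^2/(2\eta) + 1$.
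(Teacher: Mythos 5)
Your skeleton matches the paper's proof: the same three-way split of the linearized regret into the iterate gap $\nabla f_t(x_t)^\top(x_t-\xtil_t)$, the optimistic OGD term, and the gradient-estimation error, the same telescoping projection argument on $\Yc_t^o$ (Lemma \ref{lem:lin_reg}), the same recursion on $\|x_t-\xtil_t\|$ driven by $\gamma_t\ge 1/\kappa$ (your fixed point $2(\kappa-1)dG\eta$ is exactly the paper's Lemma \ref{lem:dist} with $\rho=2(\kappa-1)dG\eta$), and the same absorption of the $\Oc(\delta T)$ terms into the additive $1$. The safety argument is also the paper's.

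However, the step where you lower-bound $\gamma_t$ does not close as described. Writing $s=\tfrac12\sqrt{d}L\delta D$ and $y=\xtil_{t+1}-x_t$, plugging $\mu=1/\kappa$ and the optimistic constraint $\tilde{\nabla} g(x_t)^\top y+\tfrac{M}{2}\|y\|^2\le -g(x_t)+s$ into the pessimistic constraint at $x_t+\mu y$ leaves you needing $(1-\mu)g(x_t)+(1+\mu)s\le 0$, i.e. $-g(x_t)\ge\frac{\kappa+1}{\kappa-1}\,s$. The hypothesis you invoke, $x_t\in\Yc_t^p$ (which is all Proposition \ref{prop:viab} asserts), only gives $-g(x_t)\ge s$, which is strictly too weak since $\frac{\kappa+1}{\kappa-1}>1$; and "the slacks are kept negligible by the choice of $\delta$" has no force until you relate $-g(x_t)$ to the quantity $\alpha\epsilon$ appearing in that choice. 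The missing ingredient is the safety-margin induction: $x_{t+1}=(1-\alpha)(x_t+\gamma_t(\xtil_{t+1}-x_t))\in(1-\alpha)\Yc_t^p\subseteq(1-\alpha)\Yc$, so by convexity and $g(\mathbf{0})\le-\epsilon$ one gets $g(x_t)\le-\alpha\epsilon$ for every $t$, and this margin combined with $\delta\le\frac{2(\kappa-1)\alpha\epsilon}{(\kappa+1)\sqrt{d}LD}$ is precisely what makes the inequality close (this is the content of Lemma \ref{lem:gam}; the induction needs only well-definedness of $\gamma_t$, not its lower bound, so it stands on its own). Two smaller repairs: bound the perturbation term with the gradient bound $|f_t(x_t+\delta e_i)-f_t(x_t)|\le G\delta$ rather than smoothness, since the extra $L\delta^2T/2$ from smoothness overshoots the budget $(\tfrac12\sqrt{d}LD+G)T\delta\le1$ that yields the clean "$+1$"; and your non-expansiveness step $\|\xtil_{t+1}-x_t\|\le\|\xtil_t-\eta\tilde{\nabla} f_t(x_t)-x_t\|$ requires $x_t\in\Yc_t^o$, which you should justify via $x_t\in\Yc\subseteq\Yc_t^o$ (Propositions \ref{prop:safe} and \ref{prop:opt_pess}).
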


\paragraph{Proof sketch:}
First, we seperate the regret due to the iterate $x_t$ from the regret due to the perturbed iterates $x_t + \delta e_1, ..., x_t + \delta e_d$ as
\begin{align*}
    R_T & =  \frac{1}{k} \sum_{t=1}^{T} \sum_{i=1}^{k} f_t(x_{t,i}) - \sum_{t=1}^{T} f_t(x_*)\\
    & = \underbrace{\sum_{t=1}^{T} \left( f_t(x_t) - f_t(x_*) \right)}_{\ti} + \underbrace{\sum_{t=1}^{T} \left( \frac{1}{d + 1} \left( f_t(x_t) +  \sum_{i=1}^{d} f_t(x_t + \delta e_i) \right) - f_t(x_t) \right)}_{\tii},
\end{align*}
and note that $\tii \leq T G \delta$ given that the gradient of $f_t$ is assumed to be bounded by $G$ in Assumption~\ref{ass:cost_funcs}.
Then, we decompose $\ti$ as
\begin{equation}
    \label{eqn:ti}
    \begin{split}
    \ti = \sum_{t=1}^{T} \left( f_t(x_t) - f_t(x_*) \right) & \leq \sum_{t=1}^{T} \nabla f_t (x_t)^\top \left( x_t - x_* \right)\\
    & = \underbrace{\sum_{t=1}^{T} \nabla f_t (x_t)^\top \left( x_t - \xtil_t \right)}_{\tia} + \underbrace{\sum_{t=1}^{T} \nabla f_t (x_t)^\top \left( \xtil_t - x_* \right)}_{\tib},
    \end{split}
\end{equation}
where the inequality is due to convexity (using the idea from \cite{zinkevich2003online} of studying the linearized regret).
$\tia$ is the difference in (linearized) cost between the played iterate $x_t$ and the optimistic iterate $\xtil_t$, while $\tib$ can be interpreted as the linearized regret due to the optimistic iterate.
In Lemmas \ref{lem:gam} and \ref{lem:dist} in the following subsection, we show that the specific structure of the optimistic and pessimistic sets ensures that the distance between $x_t$ and $\xtil_t$ is small and therefore that $\tia$ is small.
Furthermore, the optimistic iterates are updated with gradient descent on the optimistic set, which is known to contain the true feasible set, so we can apply techniques from multi-point OCO (\cite{agarwal2010optimal}) to bound $\tib$.
This approach uses Lemma~\ref{lem:lin_reg}, which is given in the following subsection.

\subsection{Supporting Lemmas}

The proof of Theorem \ref{thm:main} relies on three key lemmas that are given in this section.
The first two lemmas (Lemmas \ref{lem:gam} and \ref{lem:dist}) establish a bound on the distance between the played iterates $x_t$ and the optimistic iterates $\xtil_t$, while the third lemma (Lemma \ref{lem:lin_reg}) establishes a bound on the linearized regret of the optimistic iterate.
In particular, Lemma \ref{lem:gam} (given in the following) shows that $\gamma_t$ is always larger than $1/\kappa$ when $\delta$ is chosen sufficiently small.

\begin{lemma}
    \label{lem:gam}
    Let Assumptions \ref{ass:const} and \ref{ass:slaters} hold.
    If $\delta \leq \frac{2 (\kappa - 1) \alpha \epsilon}{(\kappa + 1) \sqrt{d} L D}$, then $\gamma_t \geq 1/\kappa$ for all $t \in [T]$.
\end{lemma}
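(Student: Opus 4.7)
The plan is to directly exhibit a value of $\mu$ that is admissible in the defining maximization for $\gamma_t$. Specifically, I will set $z := x_t + (1/\kappa)(\xtil_{t+1} - x_t)$ and show that $z \in \Yc_t^p$, which by definition of $\gamma_t$ immediately yields $\gamma_t \geq 1/\kappa$. Since $\xtil_{t+1} \in \Yc_t^o \subseteq \Xc$ (by the projection on line~\ref{lne:opt_upd}) and $x_t \in \Yc_t^p \subseteq \Xc$ (by Proposition~\ref{prop:viab}, whose hypothesis $\delta \leq 2\alpha\epsilon/(\sqrt{d}LD)$ is weaker than ours since $(\kappa-1)/(\kappa+1) < 1$), convexity of $\Xc$ gives $z \in \Xc$ for free, so the entire problem reduces to verifying the quadratic inequality in~\eqref{eqn:pess} at $z$.

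The choice $\mu = 1/\kappa$ is dictated by the structure of the two sets: substituting $z - x_t = (1/\kappa)(\xtil_{t+1} - x_t)$ into~\eqref{eqn:pess} and multiplying through by $\kappa$ converts the leading quadratic coefficient $L/(2\kappa)$ into $M/2$, precisely matching the one in~\eqref{eqn:optim}. I then invoke $\xtil_{t+1} \in \Yc_t^o$ to eliminate the linear and quadratic terms in $\xtil_{t+1} - x_t$, at which point the inequality to be proved collapses to
\[
(\kappa - 1)\, g(x_t) + \tfrac{\kappa + 1}{2}\, \sqrt{d}\, L\, D\, \delta \;\leq\; 0.
\]
Under the hypothesis $\delta \leq \tfrac{2(\kappa - 1)\alpha\epsilon}{(\kappa + 1)\sqrt{d}LD}$, the second term is bounded by $(\kappa - 1)\alpha\epsilon$, so it suffices to establish the feasibility margin $g(x_t) \leq -\alpha\epsilon$.

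That feasibility margin follows from a short induction, mirroring the argument underlying Proposition~\ref{prop:viab}. For $t = 1$, $x_1 = \mathbf{0}$ and Assumption~\ref{ass:slaters} give $g(x_1) \leq -\epsilon \leq -\alpha\epsilon$ since $\alpha \in (0,1)$. For $t \geq 2$, the intermediate point $w_{t-1} := x_{t-1} + \gamma_{t-1}(\xtil_t - x_{t-1})$ lies in $\Yc_{t-1}^p$ by the definition of $\gamma_{t-1}$, hence in $\Yc$ by Proposition~\ref{prop:opt_pess}, so $g(w_{t-1}) \leq 0$. Writing $x_t = (1 - \alpha)\, w_{t-1} + \alpha \cdot \mathbf{0}$ and applying convexity of $g$ yields $g(x_t) \leq (1 - \alpha) \cdot 0 + \alpha \cdot (-\epsilon) = -\alpha\epsilon$, completing the induction.

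The only real obstacle is careful bookkeeping: confirming that the two perturbation terms $\pm \tfrac{1}{2}\sqrt{d}LD\delta$ (appearing with opposite signs in $\Yc_t^o$ and $\Yc_t^p$) aggregate into the factor $(\kappa + 1)/2$ after the $\kappa$-scaling, and that the feasibility margin $\alpha\epsilon$ is just large enough to absorb the resulting slack. Beyond that the argument is pure algebra driven by the identity $L/\kappa = M$ that is built into the definitions of the optimistic and pessimistic sets.
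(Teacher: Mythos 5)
Your proof is correct and follows essentially the same route as the paper's: you exhibit $\mu = 1/\kappa$ as an admissible scaling, use $\xtil_{t+1} \in \Yc_t^o$ together with the identity $L/\kappa = M$ to reduce membership in $\Yc_t^p$ to $(\kappa-1)g(x_t) + \tfrac{\kappa+1}{2}\sqrt{d}LD\delta \leq 0$, and close with the margin $g(x_t) \leq -\alpha\epsilon$. The only cosmetic difference is that you re-derive this margin by a short induction, whereas the paper cites it from the argument in Proposition \ref{prop:viab} (equation \eqref{eqn:g_bound}); the content is identical.
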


This result is then used in Lemma \ref{lem:dist} to show that (when $\delta$ is sufficiently small) the distance between the optimistic iterate $\xtil_t$ and played iterate $x_t$ is always bounded by a value proportional to $\eta$ and $\alpha$.
Since $\alpha$ has no other restrictions, we can choose $\alpha$ to be proportional to $\eta$ and therefore Lemma \ref{lem:dist} tells us that $\| x_t - \xtil_t \| \leq \Oc(\eta)$.
At the same time, $\eta$ needs to be chosen as $\Theta(\frac{1}{\sqrt{T}})$ to ensure optimal regret for gradient descent-based algorithms.
As it happens, Lemma \ref{lem:dist} implies that such a choice of $\eta$ also ensures that $\tib$ in \eqref{eqn:ti} is $\Oc(\sqrt{T})$, i.e. that $\sum_{t=1}^{T} \| x_t - \xtil_t \| \leq \Oc(\sqrt{T})$.

% $\Oc(\eta)$ when $\alpha$ and $\delta$ are chosen appropriately.

\begin{lemma}
    \label{lem:dist}
    Let Assumptions \ref{ass:cost_funcs}, \ref{ass:const} and \ref{ass:slaters} hold.
    Fix any $\rho > 0$.
    If $\eta \leq \frac{1/\kappa}{2 d G(1-1/\kappa)} \rho$, $\alpha \leq \frac{1}{2 D \kappa} \rho$ and $\delta \leq \frac{2 (\kappa - 1) \alpha \epsilon}{(\kappa + 1) \sqrt{d} L D}$, then it holds that $\| x_t - \xtil_t \| \leq \rho$ for all~$t$.
\end{lemma}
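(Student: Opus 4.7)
The plan is to prove the bound by induction on $t$, noting that $a_t := \|x_t - \xtil_t\| = 0$ when $t=1$ since both iterates are initialized to $\mathbf{0}$. The argument derives a contraction-style recursion $a_{t+1} \leq (1 - 1/\kappa)(a_t + \eta d G) + \alpha D$, which is unrolled to give a uniform bound independent of $t$. The hypothesis on $\delta$ is exactly the one required by Lemma~\ref{lem:gam}, and it is also strong enough (being tighter than $\delta \leq 2\alpha\epsilon/(\sqrt{d} L D)$) to invoke Proposition~\ref{prop:viab}, so we may freely use both $\gamma_t \geq 1/\kappa$ and $x_t \in \Yc_t^p$ in the inductive step.

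For the induction step, I would first rewrite the update in line~\ref{lne:act_upd} by adding and subtracting $\xtil_{t+1}$:
\begin{equation*}
    x_{t+1} - \xtil_{t+1} = (1-\alpha)(1-\gamma_t)(x_t - \xtil_{t+1}) - \alpha\, \xtil_{t+1}.
\end{equation*}
Taking norms, using $\gamma_t \geq 1/\kappa$ from Lemma~\ref{lem:gam}, dropping $(1-\alpha) \leq 1$, and bounding $\|\xtil_{t+1}\| \leq D$ by Assumption~\ref{ass:set_bound} (since $\xtil_{t+1}\in\Yc_t^o\subseteq\Xc$ and $\mathbf{0}\in\Xc$), yields
\begin{equation*}
    \|x_{t+1} - \xtil_{t+1}\| \leq (1 - 1/\kappa)\,\|x_t - \xtil_{t+1}\| + \alpha D.
\end{equation*}
Next, I would bound $\|x_t - \xtil_{t+1}\|$ via the nonexpansiveness of projection. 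Because $x_t \in \Yc_t^p \subseteq \Yc \subseteq \Yc_t^o$ (using Proposition~\ref{prop:viab} and Proposition~\ref{prop:opt_pess}), the projection fact recalled in Section~2.1 gives
\begin{equation*}
    \|\xtil_{t+1} - x_t\| = \|\Pi_{\Yc_t^o}(\xtil_t - \eta \tilde{\nabla} f_t(x_t)) - x_t\| \leq \|\xtil_t - x_t\| + \eta\|\tilde{\nabla} f_t(x_t)\| \leq \|\xtil_t - x_t\| + \eta d G,
\end{equation*}
where the last step uses Proposition~\ref{prop:grad_est}. Substituting gives the advertised recursion $a_{t+1} \leq (1-1/\kappa)(a_t + \eta d G) + \alpha D$.

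Finally I would unroll the recursion. Since $a_1 = 0$ and the multiplier $1 - 1/\kappa < 1$, summing the geometric series yields
\begin{equation*}
    a_t \leq \frac{(1-1/\kappa)\,\eta d G + \alpha D}{1/\kappa} = (\kappa - 1)\eta d G + \kappa\,\alpha D.
\end{equation*}
The hypotheses $\eta \leq \frac{1/\kappa}{2 d G(1 - 1/\kappa)}\rho = \frac{\rho}{2(\kappa-1) d G}$ and $\alpha \leq \frac{\rho}{2 D \kappa}$ make each of the two terms at most $\rho/2$, completing the induction.

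The only substantive obstacle is verifying that $x_t$ lies in $\Yc_t^o$ so that the projection nonexpansiveness can be invoked with $x_t$ as the anchor; this is the reason the hypothesis on $\delta$ needs to be strong enough to trigger both Proposition~\ref{prop:viab} (giving $x_t \in \Yc_t^p$) and Lemma~\ref{lem:gam}. Everything else is bookkeeping and a standard geometric sum.
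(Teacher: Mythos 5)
Your proof is correct and follows essentially the same route as the paper's: the same one-step inequality $\|x_{t+1}-\xtil_{t+1}\| \leq (1-1/\kappa)\left(\|x_t-\xtil_t\| + \eta d G\right) + \alpha D$, obtained from Lemma \ref{lem:gam}, the projection fact anchored at $x_t \in \Yc_t^o$, and the bound $\|\tilde{\nabla} f_t(x_t)\| \leq d G$ from Proposition \ref{prop:grad_est}. The only differences are bookkeeping: you close by unrolling the recursion to the fixed point $(\kappa-1)\eta d G + \kappa\alpha D \leq \rho$ whereas the paper closes the induction directly using the hypothesis $\|x_t-\xtil_t\|\leq\rho$, and your justification that $x_t \in \Yc_t^o$ via Propositions \ref{prop:viab} and \ref{prop:opt_pess} is if anything tidier than the paper's citation of Proposition \ref{prop:safe}, whose condition $\delta \leq \alpha r$ is not among the lemma's hypotheses.
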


Lastly, we give Lemma \ref{lem:lin_reg}, which provides a bound on the (estimated) linearized regret of the optimistic iterates.
In particular, it is easy to see that by summing \eqref{eqn:lin_reg} over $t$, the righthand side telescopes, yielding the bound $\frac{1}{\eta} D^2 + \frac{1}{2} \eta d^2 G^2 T$.
Choosing $\eta = \Theta(1/\sqrt{T})$ ensures that this is $\Oc(\sqrt{T})$.
This can then be used to bound $\tib$ in \eqref{eqn:ti}, although there will be an additive $\frac{1}{2} \sqrt{d} L \delta R T$ (due to Proposition \ref{prop:grad_est}) because \eqref{eqn:lin_reg} is in terms of the estimated gradient $\tilde{\nabla} f_t$ rather than the true gradient $\nabla f_t$.
%  and therefore there will be an additive $\frac{1}{2} \sqrt{d} L \delta R T$ (due to Proposition \ref{prop:grad_est}) when bounding $\tib$ in \eqref{eqn:ti}.
However, choosing $\delta \leq \frac{1}{T}$ ensures that this $\Oc(1)$.  

\begin{lemma}
    \label{lem:lin_reg}
    Let Assumptions \ref{ass:cost_funcs} and \ref{ass:const} hold.
    Then, for any $v \in \Yc$, it holds that
    \begin{equation}
        \label{eqn:lin_reg}
        \tilde{\nabla} f_t (x_t)^\top (\xtil_t - v) \leq \frac
        {1}{2 \eta} \left( \| \xtil_t - v \|^2 - \| \xtil_{t+1} - v \|^2 \right) + \frac{1}{2} \eta d^2 G^2,
    \end{equation}
    for all $t \in [T]$.
\end{lemma}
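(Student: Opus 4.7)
The plan is to carry out the classical projected gradient descent regret computation, adapted to (i) the estimated gradient $\tilde{\nabla} f_t(x_t)$ and (ii) the optimistic set $\Yc_t^o$ in place of the true feasible set. The only non-template ingredient is making sure the comparator $v$ lies in the set we project onto, which is exactly what Proposition \ref{prop:opt_pess} gives us: since $v \in \Yc$ and $\Yc \subseteq \Yc_t^o$, we have $v \in \Yc_t^o$.

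First, I would start from the update rule $\xtil_{t+1} = \Pi_{\Yc_t^o}(\xtil_t - \eta \tilde{\nabla} f_t(x_t))$ and apply the nonexpansiveness of projection onto a closed convex set at a point already in it (the ``useful fact'' stated in the notation subsection). Because $v \in \Yc_t^o$, this yields
\begin{equation*}
    \| \xtil_{t+1} - v \|^2 \leq \| \xtil_t - \eta \tilde{\nabla} f_t(x_t) - v \|^2.
\end{equation*}
Next, I would expand the right-hand side as a quadratic in $\eta$, obtaining
\begin{equation*}
    \| \xtil_t - v \|^2 - 2 \eta \, \tilde{\nabla} f_t(x_t)^\top (\xtil_t - v) + \eta^2 \| \tilde{\nabla} f_t(x_t) \|^2,
\end{equation*}
and then isolate the inner-product term, dividing through by $2\eta$, to arrive at
\begin{equation*}
    \tilde{\nabla} f_t(x_t)^\top (\xtil_t - v) \leq \frac{1}{2 \eta}\left( \| \xtil_t - v \|^2 - \| \xtil_{t+1} - v \|^2 \right) + \frac{\eta}{2} \| \tilde{\nabla} f_t(x_t) \|^2.
\end{equation*}

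Finally, I would invoke the second bound in Proposition \ref{prop:grad_est}, namely $\| \tilde{\nabla} f_t(x_t) \| \leq dG$, to replace the last term by $\tfrac{1}{2} \eta d^2 G^2$, producing exactly the claimed inequality \eqref{eqn:lin_reg}.

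There is essentially no hard step here; the argument is a template one-round OGD bound. The only subtlety worth flagging is that $\Yc_t^o$ is time-varying, which could complicate the eventual telescoping summation used in Theorem \ref{thm:main} — but for the single-round statement being proved it causes no trouble, precisely because the containment $\Yc \subseteq \Yc_t^o$ from Proposition \ref{prop:opt_pess} holds uniformly in $t$, so the same comparator $v = x_*$ will be admissible in each round when the lemma is later summed.
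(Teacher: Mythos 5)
Your proposal is correct and matches the paper's proof essentially step for step: both use $v \in \Yc \subseteq \Yc_t^o$ (Proposition \ref{prop:opt_pess}) to justify the projection inequality, expand the square, bound $\| \tilde{\nabla} f_t(x_t) \| \leq dG$ via Proposition \ref{prop:grad_est}, and rearrange after dividing by $2\eta$. No differences worth noting.
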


\section{Numerical Experiments}

\begin{figure}
    \centering
    \subfigure{
        \includegraphics[width=0.45\textwidth]{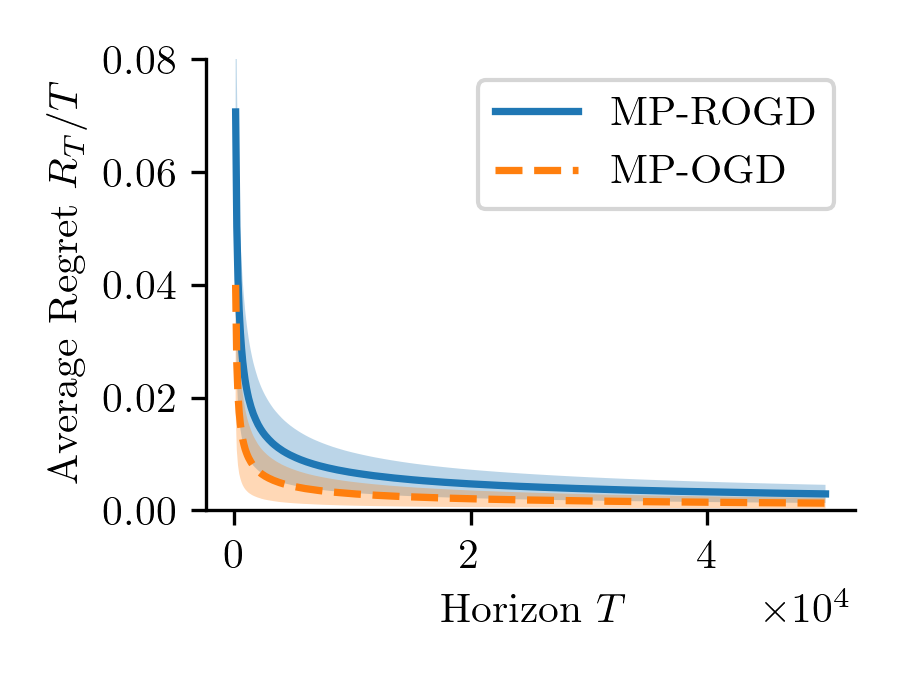}
        \label{fig:expers:a}
        }
        \hspace{0.05\textwidth}
    \subfigure{
        \includegraphics[width=0.45\textwidth]{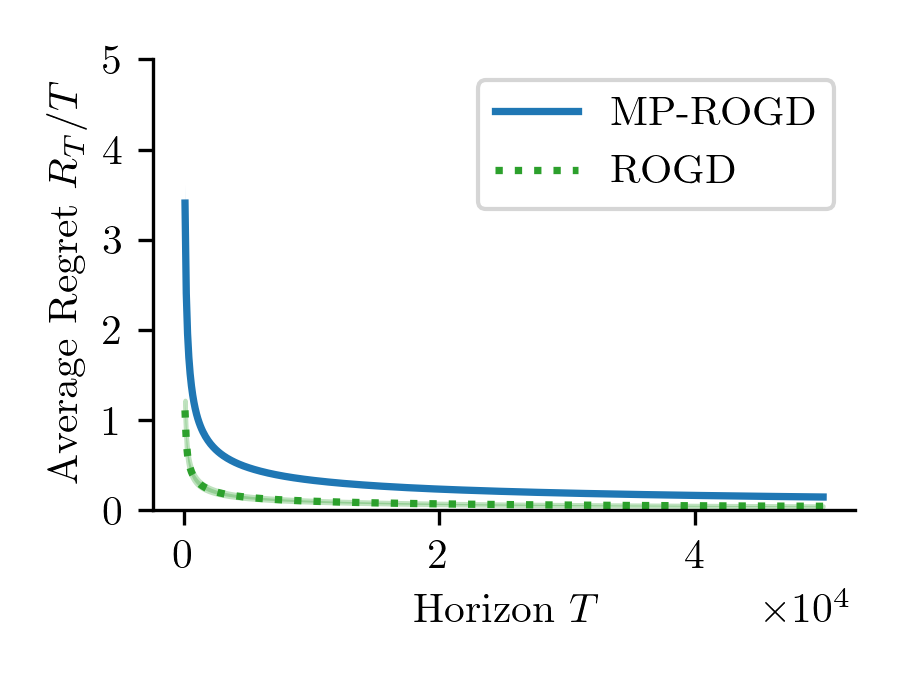}
        \label{fig:expers:b}
        }
    \caption{Average regret of MP-ROGD and benchmark algorithms in a setting with linear cost functions and a quadratic constraint function (a) and a setting with quadratic cost functions and quadratic constraint (b). The benchmark algorithms are MP-OGD (\cite{agarwal2010optimal}) with full constraint information and ROGD (\cite{accsub}) with first-order constraint feedback.}
    \label{fig:expers}
    \vspace{-0.2in}
\end{figure}
In order to assess the empirical performance of MP-ROGD, we compare MP-ROGD with two different benchmark algorithms in toy experimental settings.
In the first experimental setting, we study the impact of unknown constraints by running MP-ROGD alongside multi-point online gradient descent (\cite{agarwal2010optimal}) which uses full constraint information, and in the second setting, we study the impact of zero-order feedback by running MP-ROGD alongside ROGD (\cite{accsub}) which uses first-order feedback.

\subsection{Impact of unknown contraints}
\label{sec:unk_cont}

To study the impact of unknown constraints on empirical performance, we compare MP-ROGD to online gradient descent with $d + 1$ points of feedback from \cite{agarwal2010optimal} (abbreviated MP-OGD) which uses the full constraint information.
We run these algorithms in a toy setting with cost functions $f_t(x) = \theta_t^\top x$ with $\theta_t \sim \Uc[0,1]^d$ and constraint function of the form $g(x) = a \| x - b \|^2 + c$ where the problem dimension is $d = 2$.
We consider $10$ randomly sampled settings of this form, where $a \sim \Uc[1,10]$, $b \sim \Uc(0.2 \Sb)$, $c = - \xi^2 a$ and $\xi \sim \Uc[0.3,0.8]$.
We sample the problem parameters in this manner because it ensures that we can take $\Xc = \Bb$ such that $\Gc \subseteq \Xc$ which allows for easy computation, and $r = 0.1$ in the sense of Assumption \ref{ass:slaters}.
Furthermore, we take $G = \sqrt{2}$ (Assumption \ref{ass:cost_funcs}), $R = 2$ (Assumption \ref{ass:set_bound}), $\epsilon = -c, r = 0.1$ (Assumption \ref{ass:slaters}), and $L = 20$, $M = 2$ (Assumption \ref{ass:const}).
Note that the constraint function is $2a$-smooth and $2a$-strongly convex, but the player does not know this, so we only provide the player with the information that $L = 20$, $M = 2$ which can be deduced from the sampling distribution for $a$.
% We run MP-ROGD in these settings as well as online gradient descent with $d + 1$ points of feedback from \cite{agarwal2010optimal} (which we abbreviate by MP-OGD) which uses the full constraint information.
For MP-ROGD, we choose $\eta = \frac{R}{d G \sqrt{T}}$, $\alpha = d GM(1 - 1/\kappa) \eta/R$ and $\delta = \mathrm{min}(1/T,(\kappa - 1) \alpha \epsilon/((\kappa + 1) \sqrt{d} L R), \alpha r)$ which satisfies the conditions in Theorem~\ref{thm:main} for $\Oc(d \sqrt{T})$ regret and no constraint violation.
For MP-OGD, which is specified by the update $x_{t+1} = \Pi_{(1 - \alpha)\Yc} (x_t - \eta \tilde{\nabla} f_t (x_t))$, we choose $\eta = \frac{R}{d G \sqrt{T}}$, $\delta = 1/T$ $\alpha = \delta/ \bar{r}$ where $\bar{r} = \xi - 0.2$ (the largest ball radius that is within the constraint).

The results of these experiments are shown in Figure \ref{fig:expers:a}.
These results are generated by running both algorithms in each randomly sampled setting for every $T \in \{1\times 10^2,2\times 10^2,...,5\times 10^4\}$ and calculating the average regret $R_T/T$ for each.
The average and standard deviation of $R_T/T$ across settings is shown in Figure \ref{fig:expers:a}.
From these results, we can see that there is a significant performance gap between MP-ROGD with only zero-order constraint feedback, and MP-OGD with full constraint information.
Notably, this differs from the case of first-order feedback, for which \cite{accsub} observed little performance difference between ROGD with first-order feedback and online gradient descent with full constraint information.
This suggests that the ``price of safety'' increases as less constraint information is given to the player.

\subsection{Impact of zero-order feedback}

To study the impact of multi-point feedback on the empirical performance of safe OCO algorithms, we compare MP-ROGD with ROGD from \cite{accsub} which uses first-order constraint feedback.
We run these algorithms in a toy setting with cost functions $f_t (x) = (x - b_t)^\top A_t (x - b_t)$ where $A_t$ and $b_t$ are randomly sampled in each round, and constraint function $g (x) = x^\top \tilde{A} x + \tilde{c}$.
We generate $A_t$ in each round by sampling $A_{t,\mathrm{raw}} \sim \Uc[0,1]^{d \times d}$, taking the symmetric part $A_{t,\mathrm{sym}} = 0.5(A_{t,\mathrm{raw}} + A_{t,\mathrm{raw}}^\top)$, normalizing its spectrum $A_{t,\mathrm{norm}} = (A_{t,\mathrm{sym}} - 0.5 I)/(d - 0.5)$ and finally by shifting and scaling $A_t = 5(A_{t,\mathrm{norm}} + I)$ to ensure the spectrum is within $[1,10]$.
Also, we sample $b_t \sim \Uc[1,2]^d$ in each round which will ensure that the constraint is tight on the optimal action.
We consider 10 randomly sampled settings with $\tilde{A} = \mathrm{diag}(\tilde{a})$ and $\tilde{c} = \min_i(\tilde{a}_i)$, where $\tilde{a} \sim \Uc[1,10]^d$.
Similar to Section \ref{sec:unk_cont}, this ensures that $\Gc \subseteq \Xc$ when $\Xc = \Bb$.
Furthermore, we choose the problem parameters $G = 60$ (Assumption \ref{ass:cost_funcs}), $R = 2$ (Assumption \ref{ass:set_bound}), $\epsilon = 1, r = 1/\sqrt{10}$ (Assumption \ref{ass:slaters}), $L = 20, M = 2$ (Assumption \ref{ass:const}, Assumption \ref{ass:smooth}).
We choose algorithm parameters of MP-ROGD as $\eta = \frac{R}{d G \sqrt{T}}$, $\alpha = d GM(1 - 1/\kappa) \eta/R$ and $\delta = \mathrm{min}(1/T,(\kappa - 1) \alpha \epsilon/((\kappa + 1) \sqrt{d} L R), \alpha r)$ (same as in Section \ref{sec:unk_cont}).
Also, we run ROGD with $\eta = \frac{R}{G \sqrt{T}}$ as suggested in \cite{accsub}.

The results of these experiments are shown in Figure \ref{fig:expers:b}, which is computed the same as for the results in Section \ref{sec:unk_cont}.
These results show that ROGD outperforms MP-ROGD, suggesting that there is a cost to only having zero-order feedback versus first-order feedback.  %which is to be expected given that ROGD has access to first-order feedback on the constraint, while MP-ROGD only has access to zeroth-order feedback on the constraint.
% The average regret $R_T/T$ is shown in the 

% with $d + 1$ points of feedback from \cite{agarwal2010optimal} (which we abbreviate by MP-OGD) that has full constraint information.
% Then, we

% comparing MP-ROGD with online gradient descent with $d + 1$ points of feedback from \cite{agarwal2010optimal} (which we abbreviate by MP-OGD), and one set of

\section{Conclusion}

In this work, we study a safe OCO problem where the player chooses $d + 1$ actions in each round and observes the cost and constraint values at each of these points. 
To address this problem, we present the algorithm MP-ROGD, which enjoys $\Oc(d \sqrt{T})$ regret and never violates the constraints.
One interesting direction for future work is investigating whether it is possible to do safe OCO under nonlinear constraints with less constraint information (e.g. one or two-point feedback), although this might require weaker notions of constraint satisfaction (e.g. in expectation).
Another interesting direction for future work is investigating whether our proposed algorithmic approach can be applied to related learning problems such as distributed online optimization or online control.

\acks{This work was supported by NSF grant \#1847096.}

\bibliography{references}

\newpage

\appendix

\section{Proofs from Section \ref{sec:alg}}
\label{apx:alg}

In the following subsections, the proofs from Section \ref{sec:alg} are restated and then proved.
Specifically, the proofs of Propositions \ref{prop:grad_est}, \ref{prop:opt_pess}, \ref{prop:viab}, \ref{prop:safe} are given in Appendices \ref{apx:grad}, \ref{apx:opt_pess}, \ref{apx:viab}, \ref{apx:safe} respectively.

\subsection{Proof of Proposition \ref{prop:grad_est}}
\label{apx:grad}

\begin{proposition}[Duplicate of Proposition \ref{prop:grad_est}]
    \label{prop:grad_est2}
    Let Assumptions \ref{ass:cost_funcs}, \ref{ass:const} and \ref{ass:smooth} hold.
    Then, for every $t \in [T]$, it holds that
    \begin{equation*}
        \| \tilde{\nabla} f_t(x_t) - \nabla f_t(x_t) \| \leq \frac{1}{2} \sqrt{d} L \delta \quad \text{and} \quad \| \tilde{\nabla} g(x_t) - \nabla g(x_t) \| \leq \frac{1}{2} \sqrt{d} L \delta.
    \end{equation*}
    Furthermore, it holds that
    \begin{equation*}
        \| \tilde{\nabla} f_t(x_t) \| \leq d G.
    \end{equation*}
\end{proposition}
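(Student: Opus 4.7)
The plan is to reduce all three bounds to a coordinate-by-coordinate analysis of the forward-difference estimator, then reassemble the coordinates into a 2-norm bound. By construction, the $i$-th coordinate of $\tilde{\nabla} f_t(x_t) - \nabla f_t(x_t)$ is exactly
\[
\frac{f_t(x_t + \delta e_i) - f_t(x_t)}{\delta} - e_i^\top \nabla f_t(x_t),
\]
so once I control this scalar quantity for each $i$, the 2-norm bound follows by squaring and summing $d$ terms.

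For the two error bounds, I would invoke the $L$-smoothness of $f_t$ (Assumption \ref{ass:smooth}) applied with $x = x_t$ and $y = x_t + \delta e_i$ to obtain the upper bound $f_t(x_t + \delta e_i) - f_t(x_t) - \delta e_i^\top \nabla f_t(x_t) \leq \tfrac{L}{2}\delta^2$, and then use convexity of $f_t$ to obtain the matching lower bound $f_t(x_t+\delta e_i) - f_t(x_t) - \delta e_i^\top \nabla f_t(x_t) \geq 0$. Dividing by $\delta$ shows each coordinate of the error has magnitude at most $\tfrac{L\delta}{2}$, and summing $d$ squared coordinates yields $\|\tilde{\nabla} f_t(x_t) - \nabla f_t(x_t)\| \leq \tfrac{1}{2}\sqrt{d}L\delta$. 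The identical argument applied to $g$ under Assumption \ref{ass:const} gives the second inequality.

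For the norm bound on $\tilde{\nabla} f_t(x_t)$, I would use Assumption \ref{ass:cost_funcs} to conclude that $f_t$ is $G$-Lipschitz on $\Xc$ (via the usual integral of the gradient), which gives $|f_t(x_t+\delta e_i) - f_t(x_t)| \leq G\delta$ and hence $|e_i^\top \tilde{\nabla} f_t(x_t)| \leq G$ for each $i$. The claimed bound $\|\tilde{\nabla} f_t(x_t)\| \leq dG$ then follows from the triangle inequality $\bigl\|\sum_i v_i e_i\bigr\| \leq \sum_i |v_i|$ (this is looser than the $\sqrt{d}G$ one gets from $\|v\|_2 \leq \sqrt{d}\|v\|_\infty$, but suffices for the statement). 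I do not anticipate any real obstacle here; the only mild subtlety is that the smoothness definition adopted in the paper gives only the quadratic upper bound, so the matching lower bound used above requires an explicit appeal to convexity of $f_t$ (and of $g$).
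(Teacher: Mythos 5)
Your proposal is correct and follows essentially the same route as the paper's proof: smoothness of $f_t$ (and $g$) for the coordinate-wise upper bound, convexity for the matching lower bound, summing the squared coordinate errors to get $\tfrac{1}{2}\sqrt{d}L\delta$, and the $G$-Lipschitz property plus the triangle inequality for the $dG$ bound on $\|\tilde{\nabla} f_t(x_t)\|$. Your side remark that the last bound could be tightened to $\sqrt{d}G$ is accurate but, as you note, unnecessary for the stated claim.
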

\begin{proof}
    This proof is fairly standard (e.g. \cite{agarwal2010optimal}), but we give it for completeness.
    First, we have from smoothness that
    \begin{equation}
        \label{eqn:up_bound}
        \begin{split}
        & f_t(x_t + \delta e_i) \leq f(x_t) + \delta \nabla f_t(x_t)^\top e_i + \frac{L \delta^2}{2}\\
        & \Longleftrightarrow \quad \frac{1}{\delta} (f_t(x_t + \delta e_i) - f(x_t)) - \nabla f_t(x_t)^\top e_i \leq \frac{L \delta}{2}.
        \end{split}
    \end{equation}
    Then from convexity,
    \begin{equation}
        \begin{split}
        \label{eqn:down_bound}
        & f_t(x_t + \delta e_i) \geq f(x_t) + \delta \nabla f_t(x_t)^\top e_i\\
        & \Longleftrightarrow \quad \frac{1}{\delta} (f_t(x_t + \delta e_i) - f(x_t)) - \nabla f_t(x_t)^\top e_i \geq 0.
        \end{split}
    \end{equation}
    Combining \eqref{eqn:up_bound} and \eqref{eqn:down_bound} yields $|\frac{1}{\delta} (f_t(x_t + \delta e_i) - f(x_t)) - \nabla f_t(x_t)^\top e_i| \leq L \delta /2$.
    It follows that
    \begin{equation*}
        \| \tilde{\nabla} f_t(x_t) - \nabla f_t(x_t) \| = \sqrt{\sum_{i=1}^{d} \left| \frac{1}{\delta} (f_t(x_t + \delta e_i) - f(x_t)) - \nabla f_t(x_t)^\top e_i \right|^2} \leq \frac{1}{2} \sqrt{d} L \delta,
    \end{equation*}
    which gives the first inequality in the statement of the proposition.
    Since the constraint function $g$ is also assumed to be smooth and convex, the second inequality follows immediately.
    Lastly, the third inequality comes from the assumption that $f_t$ has bounded gradients as
    \begin{align*}
        \| \tilde{\nabla} f_t(x_t) \| & = \left\| \frac{1}{\delta} \sum_{i=1}^{d} (f_t(x_t + \delta e_i) - f_t(x_t)) e_i \right\|\\
        &  \leq \frac{1}{\delta} \sum_{i=1}^{d} \| (f_t(x_t + \delta e_i) - f_t(x_t)) e_i \| \leq \frac{1}{\delta} d G \delta = d G,
    \end{align*}
    completing the proof.
\end{proof}

\subsection{Proof of Proposition \ref{prop:opt_pess}}
\label{apx:opt_pess}

\begin{proposition}[Duplicate of Proposition \ref{prop:opt_pess}]
    \label{prop:opt_pess2}
    Let Assumptions \ref{ass:set_bound} and \ref{ass:const} hold. Then, it follows that $\Yc_t^p \subseteq \Yc \subseteq \Yc_t^o$ for all $t$.
\end{proposition}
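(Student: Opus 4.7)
The plan is to prove the two containments separately, in each case combining the appropriate one of smoothness/strong convexity (Assumption \ref{ass:const}) with the gradient estimation error bound from Proposition \ref{prop:grad_est} and the diameter bound from Assumption \ref{ass:set_bound}. The defining inequality for $\Yc_t^o$ (resp. $\Yc_t^p$) is constructed precisely so that it becomes an underestimate (resp. overestimate) of $g(x)$ once the gradient estimation error is accounted for, so both containments should fall out of one-line applications of the relevant quadratic inequality.

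For $\Yc_t^p \subseteq \Yc$: I would pick any $x \in \Yc_t^p$; membership in $\Xc$ is immediate from the definition, so I only need $g(x) \leq 0$. Applying $L$-smoothness at $x_t$ gives $g(x) \leq g(x_t) + \nabla g(x_t)^\top (x-x_t) + \frac{L}{2}\|x-x_t\|^2$. I would then add and subtract $\tilde{\nabla} g(x_t)^\top (x-x_t)$, and bound the cross term $(\nabla g(x_t) - \tilde{\nabla} g(x_t))^\top (x-x_t)$ from above by Cauchy--Schwarz, using Proposition \ref{prop:grad_est} to get $\|\nabla g(x_t) - \tilde{\nabla} g(x_t)\| \leq \tfrac{1}{2}\sqrt{d} L \delta$ and Assumption \ref{ass:set_bound} to get $\|x - x_t\| \leq D$. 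The resulting upper bound on $g(x)$ matches the defining inequality of $\Yc_t^p$ and is therefore $\leq 0$.

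For $\Yc \subseteq \Yc_t^o$: I would pick any $x \in \Yc$, so $x \in \Xc$ and $g(x) \leq 0$. Now I apply $M$-strong convexity at $x_t$ to get $g(x) \geq g(x_t) + \nabla g(x_t)^\top (x-x_t) + \frac{M}{2}\|x-x_t\|^2$. The same add/subtract trick with $\tilde{\nabla} g(x_t)^\top (x-x_t)$ applies, except now I need a \emph{lower} bound on $(\nabla g(x_t) - \tilde{\nabla} g(x_t))^\top (x-x_t)$, which I obtain as $-\tfrac{1}{2}\sqrt{d} L \delta D$ by Cauchy--Schwarz together with Proposition \ref{prop:grad_est} and Assumption \ref{ass:set_bound}. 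Combining with $g(x) \leq 0$ rearranges to exactly the defining inequality of $\Yc_t^o$.

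There isn't really a ``hard part'' here: the optimistic and pessimistic sets were tailor-made so that this proposition reduces to bookkeeping. The only thing to be careful about is the direction of the inequality when handling the gradient estimation error, i.e.\ using the upper bound $+\tfrac{1}{2}\sqrt{d} L \delta D$ for the smoothness argument and the lower bound $-\tfrac{1}{2}\sqrt{d} L \delta D$ for the strong-convexity argument, so that the signs line up with the corresponding constants in \eqref{eqn:optim} and \eqref{eqn:pess}.
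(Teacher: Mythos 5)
Your proposal is correct and follows essentially the same route as the paper's proof: smoothness plus Cauchy--Schwarz with the $\tfrac{1}{2}\sqrt{d}L\delta D$ error bound gives $\Yc_t^p \subseteq \Yc$, and strong convexity with the matching lower bound gives $\Yc \subseteq \Yc_t^o$. The sign bookkeeping you flag is exactly the only subtlety, and you handle it correctly.
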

\begin{proof}
    Firstly, it holds for all $y \in \Yc_t^p$ that
    \begin{align*}
        g(y) & \leq g(x_t) +  \nabla g(x_t)^\top (y - x_t) + \frac{L}{2} \left\| y - x_t \right\|^2\\
        & \leq g(x_t) + \| \nabla g(x_t) - \tilde{\nabla} g(x_t)\| \| y - x_t \| + \tilde{\nabla} g(x_t)^\top (y - x_t) + \frac{L}{2} \left\| y - x_t\right\|^2\\
        & \leq g(x_t) + \frac{1}{2} \sqrt{d} L \delta D + \tilde{\nabla} g(x_t)^\top (y - x_t) + \frac{L}{2} \left\| y - x_t\right\|^2 \leq 0
    \end{align*}
    where the first inequality comes from the smoothness assumption on $g$, the second inequality is Cauchy-Schwarz and the third inequality uses the error bound on the gradient estimator in Proposition~\ref{prop:grad_est} and Assumption \ref{ass:set_bound}.
    It follows that $\Yc_t^p \subseteq \Yc$.
    Then, using the strong-convexity assumption, it holds for all $y \in \Yc$ that
    \begin{align*}
        0 \geq g(y) & \geq g(x_t) +  \nabla g(x_t)^\top (y - x_t) + \frac{M}{2} \left\| y - x_t \right\|^2\\
        & \geq g(x_t) - \| \nabla g(x_t) - \tilde{\nabla} g(x_t)\| \| y - x_t \| + \tilde{\nabla} g(x_t)^\top (y - x_t) + \frac{M}{2} \left\| y - x_t\right\|^2\\
        & \geq g(x_t) - \frac{1}{2} \sqrt{d} L \delta D + \tilde{\nabla} g(x_t)^\top (y - x_t) + \frac{M}{2} \left\| y - x_t\right\|^2,
    \end{align*}
    and therefore $\Yc \subseteq \Yc_t^o$.
\end{proof}

\subsection{Proof of Proposition \ref{prop:viab}}
\label{apx:viab}

\begin{proposition}[Duplicate of Proposition \ref{prop:viab}]
    \label{prop:viab2}
    Let Assumptions \ref{ass:const} and \ref{ass:slaters} hold.
    If $\delta \leq \frac{2 \alpha \epsilon}{\sqrt{d} L D}$, then $x_t \in \Yc_t^p$ (and therefore $\Yc_t^p$ is nonempty) for all rounds $t \in [T]$.
\end{proposition}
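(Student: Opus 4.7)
The plan is to prove Proposition \ref{prop:viab} by induction on $t$, showing that $g(x_t) \leq -\alpha \epsilon$ (with a stronger $-\epsilon$ bound at $t=1$) and then using the fact that evaluating the defining inequality of $\Yc_t^p$ at $x = x_t$ kills all terms except $g(x_t) + \tfrac{1}{2}\sqrt{d}L\delta D$.

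For the base case $t = 1$, I would observe that $x_1 = \mathbf{0} \in \Xc$ by Assumption \ref{ass:slaters} and $g(\mathbf{0}) \leq -\epsilon$, so $g(x_1) + \tfrac{1}{2}\sqrt{d}L\delta D \leq -\epsilon + \alpha\epsilon \leq 0$ since $\alpha \in (0,1)$ and the assumed bound on $\delta$ gives $\tfrac{1}{2}\sqrt{d}L\delta D \leq \alpha\epsilon$. Thus $x_1 \in \Yc_1^p$.

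For the inductive step, assume $x_{t-1} \in \Yc_{t-1}^p$. Then in line \ref{lne:safe_scal}, the choice $\mu = 0$ is admissible, so $\gamma_{t-1}$ is well-defined and the intermediate point $y_{t-1} := x_{t-1} + \gamma_{t-1}(\xtil_t - x_{t-1})$ lies in $\Yc_{t-1}^p$. By Proposition \ref{prop:opt_pess} we have $\Yc_{t-1}^p \subseteq \Yc$, hence $g(y_{t-1}) \leq 0$ and $y_{t-1} \in \Xc$. Since $x_t = (1-\alpha) y_{t-1} = (1-\alpha) y_{t-1} + \alpha \mathbf{0}$ is a convex combination of two points in $\Xc$, we have $x_t \in \Xc$. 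Convexity of $g$ then gives
\begin{equation*}
    g(x_t) \leq (1-\alpha) g(y_{t-1}) + \alpha g(\mathbf{0}) \leq 0 + \alpha(-\epsilon) = -\alpha\epsilon.
\end{equation*}
Combining this with the assumption $\delta \leq \tfrac{2\alpha\epsilon}{\sqrt{d}LD}$ yields $g(x_t) + \tfrac{1}{2}\sqrt{d}L\delta D \leq -\alpha\epsilon + \alpha\epsilon = 0$, which is exactly the defining inequality of $\Yc_t^p$ evaluated at $x = x_t$. Hence $x_t \in \Yc_t^p$, completing the induction.

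There is no substantial obstacle here: the whole argument rests on the fact that shrinking toward $\mathbf{0}$ by a factor $1-\alpha$ injects a margin of $\alpha\epsilon$ into the constraint value (via convexity and the Slater-type bound $g(\mathbf{0}) \leq -\epsilon$), and the assumed bound on $\delta$ is precisely tuned so that this margin absorbs the gradient-estimation slack $\tfrac{1}{2}\sqrt{d}L\delta D$. The only subtlety to be careful about is verifying that $\gamma_{t-1}$ (and hence the update) is well-defined at each step, which is why the induction must be phrased on the statement $x_t \in \Yc_t^p$ itself rather than just on $g(x_t) \leq -\alpha\epsilon$.
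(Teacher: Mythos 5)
Your proof is correct and follows essentially the same route as the paper's: induction on $x_t \in \Yc_t^p$, well-definedness of $\gamma_t$ via $\mu = 0$, the inclusion $\Yc_t^p \subseteq \Yc$ from Proposition \ref{prop:opt_pess}, and the convexity argument $g(x_t) \leq (1-\alpha)g(y_{t-1}) + \alpha g(\mathbf{0}) \leq -\alpha\epsilon$, which absorbs the slack $\tfrac{1}{2}\sqrt{d}L\delta D \leq \alpha\epsilon$. The only cosmetic difference is that the paper factors out the intermediate claim ``$x_t \in (1-\alpha)\Yc$ implies $x_t \in \Yc_t^p$'' before running the induction, whereas you fold that step directly into the inductive argument.
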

\begin{proof}
    First, we show that if $x_t \in (1 - \alpha) \Yc$ and $\delta \leq \frac{2 \alpha \epsilon}{\sqrt{d} L D}$, then $x_t \in \Yc_t^p$.
    Under these conditions, we can choose $y = x_t$ to get that
    \begin{align*} 
        & g(x_t) + \frac{1}{2} \sqrt{d} L \delta D + \tilde{\nabla} g(x_t)^\top (y - x_t) + \frac{L}{2} \| y - x_t \|^2\\
        & \leq g(x_t) + \frac{1}{2} \sqrt{d} L \delta D\\
        & \leq g(x_t) + \alpha \epsilon\\
        & \leq - \alpha \epsilon + \alpha \epsilon = 0,
    \end{align*}
    where the first inequality uses the choice $y = x_t$, the second inequality uses the choice of $\delta$.
    The last inequality uses the fact that if $x_t \in (1 - \alpha) \Yc$, then $g(x_t) \leq - \alpha \epsilon$, because then there exists $z \in \Yc$ such that $(1 - \alpha)z =  x_t$ and therefore
    \begin{equation}
        \label{eqn:g_bound}
        g(x_t) = g( \alpha \mathbf{0} + (1 - \alpha) z) \leq \alpha g(\mathbf{0}) + (1 - \alpha) g(z) \leq - \alpha \epsilon.
    \end{equation}
    Therefore, $x_t \in \Yc_t^p$ if $x_t \in (1 - \alpha) \Yc$ and $\delta \leq \frac{2 \alpha \epsilon}{\sqrt{d} L D}$.

    Using this, we show the statement of the lemma with induction over $t$.
    The base case holds because $x_1 = \mathbf{0} \in (1 - \alpha) \Yc$ so $x_1 \in \Yc_1^p$.
    Then, suppose that $x_t \in \Yc_t^p$.
    It follows that there exists $\mu \in [0,1]$ such that $x_t + \mu (\xtil_{t+1} - x_t) \in \Yc_t^p$ (e.g. one can choose $\mu = 0$).
    Therefore, the update of $\gamma_t$ in line \ref{lne:safe_scal} is well-defined and indeed $x_t + \gamma_t (\xtil_{t+1} - x_t) \in \Yc_t^p$.
    It follows that
    \begin{equation*}
        x_{t+1} = (1 - \alpha) (x_t + \gamma_t (\xtil_{t+1} - x_t)) \in (1 - \alpha) \Yc_t^p \subseteq (1 - \alpha)\Yc,
    \end{equation*}
    where the second inclusion is from from Proposition \ref{prop:opt_pess}.
    Therefore, $x_{t+1} \in \Yc_{t+1}^p$ and the induction is complete.
\end{proof}

\subsection{Proof of Proposition \ref{prop:safe}}

\label{apx:safe}

\begin{proposition}[Duplicate of Proposition \ref{prop:safe}]
    \label{prop:safe2}
    Let Assumption \ref{ass:const} hold and assume that $x_t \in \Yc_t^p$ for all $t \in [T]$. If $\delta \leq \alpha r$, then all actions played by the algorithm, i.e. $x_t, x_t + \delta e_1, ..., x_t + \delta e_d$ for all $t$, are in the feasible set $\Yc$.
\end{proposition}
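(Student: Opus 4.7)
The plan is to first establish the stronger intermediate claim that the iterate itself is not just feasible but lives in the shrunken feasible set $(1-\alpha)\Yc$, and then to exploit the ball $r\Bb \subseteq \Yc$ guaranteed by Assumption \ref{ass:slaters} to absorb the perturbation $\delta e_i$ as a convex-combination coefficient on $\mathbf{0}$.

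First I would show that $x_t \in (1-\alpha)\Yc$ for every $t$. For $t=1$ this is immediate from $x_1 = \mathbf{0} \in \Yc$. For $t \geq 2$, the update in line \ref{lne:act_upd} gives $x_t = (1-\alpha)(x_{t-1} + \gamma_{t-1}(\xtil_t - x_{t-1}))$, and by the definition of $\gamma_{t-1}$ in line \ref{lne:safe_scal} the point $x_{t-1} + \gamma_{t-1}(\xtil_t - x_{t-1})$ lies in $\Yc_{t-1}^p$, which is contained in $\Yc$ by Proposition \ref{prop:opt_pess2}. Hence $x_t \in (1-\alpha)\Yc$, which in particular means $x_t \in \Yc$, handling the center-point action.

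Next I would handle each perturbed action $x_t + \delta e_i$. Writing $x_t = (1-\alpha)z$ for some $z \in \Yc$, I would express
\begin{equation*}
    x_t + \delta e_i = (1-\alpha) z + \alpha \cdot \tfrac{\delta}{\alpha} e_i.
\end{equation*}
The hypothesis $\delta \leq \alpha r$ yields $\|\tfrac{\delta}{\alpha} e_i\| = \tfrac{\delta}{\alpha} \leq r$, so $\tfrac{\delta}{\alpha} e_i \in r\Bb \subseteq \Yc$ by Assumption \ref{ass:slaters}. Since $\Yc$ is the intersection of the convex sets $\Xc$ and $\Gc$, it is convex, and therefore the convex combination of $z \in \Yc$ and $\tfrac{\delta}{\alpha} e_i \in \Yc$ with weights $1-\alpha$ and $\alpha$ lies in $\Yc$, giving $x_t + \delta e_i \in \Yc$ as desired.

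No step is really hard here; the only subtle point is recognizing that the $(1-\alpha)$ contraction in line \ref{lne:act_upd} was designed precisely to leave an $\alpha$-fraction of ``budget'' that the safety ball $r\Bb$ can use to swallow arbitrary coordinate perturbations of size up to $\alpha r$. Everything else is bookkeeping with the inclusion $\Yc_t^p \subseteq \Yc$ from Proposition \ref{prop:opt_pess2} and convexity of $\Yc$.
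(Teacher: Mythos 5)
Your proof is correct and takes essentially the same route as the paper: the paper runs the identical argument in Minkowski-sum form, showing $x_{t+1} + \delta \Bb \subseteq (1-\alpha)\Yc_t^p + \alpha r \Bb \subseteq (1-\alpha)\Yc \oplus \alpha \Yc = \Yc$, which is exactly your pointwise convex combination of $z \in \Yc$ and $\tfrac{\delta}{\alpha} e_i \in r\Bb \subseteq \Yc$ using $\Yc_t^p \subseteq \Yc$ and $\delta \leq \alpha r$. The only cosmetic difference is that you first isolate the claim $x_t \in (1-\alpha)\Yc$ and treat the center and perturbed points separately, while the paper handles all $d+1$ actions at once via the inclusion $x_t + \delta\Bb \subseteq \Yc$.
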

\begin{proof}
    Note that $x_t, x_t + \delta e_1, ..., x_t + \delta e_d$ are in $x_t +  \delta \Bb$ so it is sufficient to show that $x_t + \delta \Bb \subseteq \Yc$ for all $t$.
    Since $x_t \in \Yc_t^p$, the update of $\gamma_t$ in line \ref{lne:safe_scal} is well-defined and indeed $x_t + \gamma_t (\xtil_{t+1} - x_t) \in \Yc_t^p$.
    It follows that
    \begin{align*}
        x_{t+1} + \delta \Bb & = (1 - \alpha) (x_t + \gamma_t (\xtil_{t+1} - x_t)) + \alpha r \Bb \tag{a} \label{eqn:safe_a}\\
        & \subseteq (1 - \alpha) \Yc_t^p + \alpha r \Bb \tag{b} \label{eqn:safe_b}\\
        & \subseteq (1 - \alpha) \Yc + \alpha r \Bb \tag{c} \label{eqn:safe_c}\\
        & \subseteq (1 - \alpha) \Yc \oplus \alpha \Yc \tag{d} \label{eqn:safe_d}\\
        & = \Yc, \tag{e} \label{eqn:safe_e}
    \end{align*}
    where the \eqref{eqn:safe_a} is the update of $x_{t+1}$ in line \ref{lne:act_upd}, \eqref{eqn:safe_b} uses that $x_t + \gamma_t (\xtil_{t+1} - x_t) \in \Yc_t^p$, \eqref{eqn:safe_c} uses $\Yc_t^p \subseteq \Yc$ from Proposition \ref{prop:opt_pess}, \eqref{eqn:safe_d} uses $r\Bb \subseteq \Yc$ from Assumption \ref{ass:slaters}, and \eqref{eqn:safe_e} uses the convexity of $\Yc$.
\end{proof}

\section{Proof of Theorem \ref{thm:main}}
\label{apx:reg}

In this section, we first state the supporting lemmas for Theorem \ref{thm:main} in Appendix \ref{apx:tech_lem} and then give the proof of the theorem in Appendix \ref{apx:thm}.

\subsection{Supporting Lemmas}
\label{apx:tech_lem}

In this section, we give the proof of the Lemmas \ref{lem:gam}, \ref{lem:dist} and \ref{lem:lin_reg}.
First, we restate Lemma \ref{lem:gam} and then give the proof.

\begin{lemma}[Duplicate of Lemma \ref{lem:gam}]
    \label{lem:gam2}
    Let Assumptions \ref{ass:const} and \ref{ass:slaters} hold.
    If $\delta \leq \frac{2 (\kappa - 1) \alpha \epsilon}{(\kappa + 1) \sqrt{d} L D}$, then $\gamma_t \geq 1/\kappa$ for all $t \in [T]$.
\end{lemma}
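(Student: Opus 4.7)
\textbf{Proof plan for Lemma \ref{lem:gam}.} The plan is to show directly that the candidate point $z_t := x_t + (1/\kappa)(\xtil_{t+1} - x_t)$ lies in $\Yc_t^p$; since $\gamma_t$ is defined as the maximum $\mu \in [0,1]$ for which $x_t + \mu(\xtil_{t+1} - x_t) \in \Yc_t^p$, this immediately gives $\gamma_t \geq 1/\kappa$. To set up the needed bound on $g(x_t)$, I would first observe that the hypothesis $\delta \leq \frac{2(\kappa-1)\alpha\epsilon}{(\kappa+1)\sqrt{d}LD}$ is strictly stronger than $\delta \leq \frac{2\alpha\epsilon}{\sqrt{d}LD}$ (because $(\kappa-1)/(\kappa+1) < 1$), so Proposition \ref{prop:viab} applies and gives $x_t \in \Yc_t^p$. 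Tracking the inductive argument in that proof further shows $x_t \in (1-\alpha)\Yc$, and hence by the same convex-combination argument as in \eqref{eqn:g_bound} one gets $g(x_t) \leq -\alpha\epsilon$.

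Next I would verify the defining inequality of $\Yc_t^p$ at $z_t$. Membership $z_t \in \Xc$ is automatic since $z_t$ is a convex combination of $x_t \in \Xc$ and $\xtil_{t+1} \in \Yc_t^o \subseteq \Xc$. Substituting $z_t - x_t = (1/\kappa)(\xtil_{t+1} - x_t)$ into the pessimistic constraint yields
\begin{equation*}
g(x_t) + \tfrac{1}{2}\sqrt{d}L\delta D + \tfrac{1}{\kappa}\tilde{\nabla} g(x_t)^\top(\xtil_{t+1} - x_t) + \tfrac{L}{2\kappa^2}\|\xtil_{t+1} - x_t\|^2.
\end{equation*}
Since $\xtil_{t+1}$ is the projection onto $\Yc_t^o$ in line \ref{lne:opt_upd}, the defining inequality \eqref{eqn:optim} gives $\tilde{\nabla} g(x_t)^\top (\xtil_{t+1} - x_t) \leq -g(x_t) + \tfrac{1}{2}\sqrt{d}L\delta D - \tfrac{M}{2}\|\xtil_{t+1} - x_t\|^2$. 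Plugging this in, the whole expression is upper bounded by
\begin{equation*}
(1 - \tfrac{1}{\kappa}) g(x_t) + (1 + \tfrac{1}{\kappa}) \tfrac{1}{2}\sqrt{d}L\delta D + \left( \tfrac{L}{2\kappa^2} - \tfrac{M}{2\kappa} \right) \|\xtil_{t+1} - x_t\|^2.
\end{equation*}

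The critical step is the exact cancellation of the last coefficient: since $L = \kappa M$, we have $L/(2\kappa^2) = M/(2\kappa)$, so the quadratic term disappears. Using $g(x_t) \leq -\alpha\epsilon$ from the first paragraph, what remains is at most $-(1-\tfrac{1}{\kappa})\alpha\epsilon + (1+\tfrac{1}{\kappa})\tfrac{1}{2}\sqrt{d}L\delta D$, and this is nonpositive precisely when $\delta \leq \frac{2(\kappa-1)\alpha\epsilon}{(\kappa+1)\sqrt{d}LD}$, which is our hypothesis. Hence $z_t \in \Yc_t^p$ and the lemma follows. The main subtlety to watch is keeping the ratio $L/M = \kappa$ consistent so that the quadratic terms really do cancel; this cancellation is the whole point of pairing strong convexity on the optimistic side with smoothness on the pessimistic side, and it is what permits $\gamma_t$ to be bounded below by a constant independent of $\eta$ and $T$.
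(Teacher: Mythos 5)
Your proposal is correct and follows essentially the same route as the paper's proof: you use the optimistic-set inequality for $\xtil_{t+1}$, substitute $\mu = 1/\kappa$ into the pessimistic constraint so that $L/(2\kappa^2) = M/(2\kappa)$ makes the quadratic terms cancel, and close the bound with $g(x_t) \leq -\alpha\epsilon$ (via $x_t \in (1-\alpha)\Yc$ from the validity argument) together with the stated condition on $\delta$. The only cosmetic difference is that the paper keeps the strong-convexity quadratic on the right-hand side of the rearranged optimistic inequality and cancels via $\mu L = M$ inside the parenthesis, whereas you carry both quadratic terms and cancel their coefficients explicitly; the substance is identical.
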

\begin{proof}
    First, note that the requirements on $\delta$ ensure that the algorithm is well-defined in the sense of Proposition \ref{prop:viab} as,
    \begin{equation}
        \label{eqn:sat_delta}
        \delta \leq \frac{2 (\kappa - 1) \alpha \epsilon}{(\kappa + 1) \sqrt{d} L D} \leq \frac{2 \kappa \alpha \epsilon}{(\kappa + 1) \sqrt{d} L D} \leq \frac{2 \alpha \epsilon}{\sqrt{d} L D}.
    \end{equation}
    Then, let $y := \xtil_{t+1} - x_t$.
    Since $\xtil_{t+1}$ is in $\Yc_t^o$, we know that
    \begin{equation}
        \label{eqn:opt_rearr}
        \begin{split}
        & g(x_t)  - \frac{1}{2} \sqrt{d} L \delta D + \tilde{\nabla} g(x_t)^\top y + \frac{M}{2} \| y \|^2 \leq 0\\
        & \Longleftrightarrow \tilde{\nabla} g(x_t)^\top y + \frac{M}{2} \| y  \|^2 \leq -g(x_t) + \frac{1}{2} \sqrt{d} L \delta D. 
        \end{split}
    \end{equation}
    Then, we aim to find a $\mu \in [0,1]$ such that $u = x_t + \mu (\xtil_{t+1} - x_t) = x_t + \mu y$ is in $\Yc_t^p$.
    Since $\Xc$ is convex and $x_t$ and $\xtil_{t+1}$ are in $\Xc$, we know that $u$ is in $\Xc$ for any such $\mu$.
    Then, choosing $\mu = 1/\kappa$, we have that 
    \begin{align*} 
        & g(x_t) + \frac{1}{2} \sqrt{d} L \delta D + \tilde{\nabla} g(x_t)^\top (u - x_t) + \frac{L}{2} \| u - x_t \|^2\\
        &= g(x_t) + \frac{1}{2} \sqrt{d} L \delta D + \mu \tilde{\nabla} g(x_t)^\top y + \mu^2 \frac{L}{2} \| y \|^2\\
        &= g(x_t) + \frac{1}{2} \sqrt{d} L \delta D + \mu \left( \tilde{\nabla} g(x_t)^\top y + \mu \frac{L}{2} \| y \|^2 \right)\\
        & = g(x_t) +  \frac{1}{2} \sqrt{d} L \delta D + \mu \left( \tilde{\nabla} g(x_t)^\top y + \frac{M}{2} \| y \|^2 \right) \tag{a} \label{eqn:scala}\\
        & \leq g(x_t) +  \frac{1}{2} \sqrt{d} L \delta D + \mu \left( -g(x_t) + \frac{1}{2} \sqrt{d} L \delta D \right) \tag{b} \label{eqn:scalb}\\
        & \leq (1 - \mu) g(x_t) +  (1 + \mu) \frac{1}{2} \sqrt{d} L \delta D \\
        & \leq (1 - \mu) g(x_t) + (\mu - 1)g(x_t) = 0 \tag{c} \label{eqn:scalc},
    \end{align*}
    where \eqref{eqn:scala} uses the choice of $\mu$ and \eqref{eqn:scalb} uses \eqref{eqn:opt_rearr}.
    Line \eqref{eqn:scalc} uses the the fact that
    \begin{equation*}
        \delta \leq \frac{2 (\kappa - 1) \alpha \epsilon}{(\kappa + 1)\sqrt{d} L D} \leq \frac{-2 (\kappa - 1) g(x_t)}{(\kappa + 1)\sqrt{d} L D} = \frac{2 (\mu - 1)g(x_t)}{(1 + \mu)\sqrt{d} L D}
    \end{equation*}
    where we use the fact that $g(x_t) \leq -\alpha \epsilon$ since $x_t \in (1 - \alpha) \Yc$ as in \eqref{eqn:g_bound}.
    Finally, since $u = x_t + \mu (\xtil_{t+1} - x_t)$ is in $\Yc_t^p$ with $\mu = 1/\kappa$ and $\gamma_t$ is defined as the largest such $\mu$, we know that $\gamma_t \geq 1/\kappa$ by definition.
\end{proof}

Next, we restate Lemma \ref{lem:dist} and then give the proof in the following.

\begin{lemma}[Duplicate of Lemma \ref{lem:dist}]
    \label{lem:dist2}
    Let Assumptions \ref{ass:cost_funcs}, \ref{ass:const} and \ref{ass:slaters} hold.
    Fix any $\rho > 0$.
    If $\eta \leq \frac{1/\kappa}{2 d G(1-1/\kappa)} \rho$, $\alpha \leq \frac{1}{2 D \kappa} \rho$ and $\delta \leq \frac{2 (\kappa - 1) \alpha \epsilon}{(\kappa + 1) \sqrt{d} L D}$, then it holds that $\| x_t - \xtil_t \| \leq \rho$ for all~$t$.
\end{lemma}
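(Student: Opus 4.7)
The plan is to derive a recursive inequality for $a_t := \|x_t - \xtil_t\|$ and then solve it geometrically. First, I would verify that the hypotheses of Proposition \ref{prop:viab} and Lemma \ref{lem:gam} are met: since $\frac{2(\kappa-1)}{\kappa+1}\le 2$, the assumed bound $\delta\le \frac{2(\kappa-1)\alpha\epsilon}{(\kappa+1)\sqrt d LD}$ implies $\delta\le\frac{2\alpha\epsilon}{\sqrt d LD}$, so Proposition \ref{prop:viab} gives $x_t\in\Yc_t^p$ for all $t$, and Lemma \ref{lem:gam} gives $\gamma_t\ge 1/\kappa$ for all $t$. In particular, $x_t\in\Yc_t^p\subseteq\Yc\subseteq\Yc_t^o$ by Proposition \ref{prop:opt_pess}.

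Next, I would expand the update for $x_{t+1}$. Writing $z_t=(1-\gamma_t)x_t+\gamma_t\xtil_{t+1}$ so that $x_{t+1}=(1-\alpha)z_t$, and using $z_t=\xtil_{t+1}+(1-\gamma_t)(x_t-\xtil_{t+1})$, I get
\begin{equation*}
x_{t+1}-\xtil_{t+1}=-\alpha\xtil_{t+1}+(1-\alpha)(1-\gamma_t)(x_t-\xtil_{t+1}).
\end{equation*}
Taking norms, using $\|\xtil_{t+1}\|\le D$ (since $\mathbf 0,\xtil_{t+1}\in\Xc$) and $(1-\alpha)(1-\gamma_t)\le 1-1/\kappa$ from Lemma \ref{lem:gam}, yields
\begin{equation*}
\|x_{t+1}-\xtil_{t+1}\|\le \alpha D+(1-1/\kappa)\,\|x_t-\xtil_{t+1}\|.
\end{equation*}
To control $\|x_t-\xtil_{t+1}\|$, I would use $x_t\in\Yc_t^o$ together with the non-expansiveness of the projection (the fact noted in the preliminaries that $\|y-x\|\ge\|\Pi_\Xc(y)-x\|$ when $x\in\Xc$) and Proposition \ref{prop:grad_est}:
\begin{equation*}
\|x_t-\xtil_{t+1}\|=\|\Pi_{\Yc_t^o}(\xtil_t-\eta\tilde\nabla f_t(x_t))-x_t\|\le\|\xtil_t-x_t\|+\eta\|\tilde\nabla f_t(x_t)\|\le a_t+\eta dG.
\end{equation*}

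Combining these gives the recursion $a_{t+1}\le(1-1/\kappa)a_t+\bigl[\alpha D+(1-1/\kappa)\eta dG\bigr]$. Since $x_1=\xtil_1=\mathbf 0$, we have $a_1=0$, so summing the geometric series yields
\begin{equation*}
a_t\le \kappa\bigl[\alpha D+(1-1/\kappa)\eta dG\bigr]=\kappa\alpha D+(\kappa-1)\eta dG.
\end{equation*}
The assumed bounds $\alpha\le\rho/(2D\kappa)$ and $\eta\le\frac{1/\kappa}{2dG(1-1/\kappa)}\rho$ make each term at most $\rho/2$, giving $a_t\le\rho$ and completing the proof. The main obstacle is the projection step: $\xtil_t$ need not lie in $\Yc_t^o$ (only in $\Yc_{t-1}^o$), so non-expansiveness cannot be applied from $\xtil_t$ directly; the fix is the observation above that $x_t$ itself does lie in $\Yc_t^o$, which lets me measure the projection's displacement from $x_t$ instead.
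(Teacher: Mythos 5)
Your proof is correct and takes essentially the same route as the paper: the same decomposition into the $\alpha\|\xtil_{t+1}\|\le\alpha D$ shrinkage term, the $(1-\alpha)(1-\gamma_t)\le 1-1/\kappa$ contraction via Lemma \ref{lem:gam}, projection non-expansiveness anchored at $x_t\in\Yc_t^o$, and the bound $\|\tilde\nabla f_t(x_t)\|\le dG$ from Proposition \ref{prop:grad_est}. The only difference is bookkeeping: the paper maintains the invariant $\|x_t-\xtil_t\|\le\rho$ by induction and inserts the parameter bounds inside the inductive step, whereas you solve the linear recursion $a_{t+1}\le(1-1/\kappa)a_t+\alpha D+(1-1/\kappa)\eta d G$ by a geometric series, arriving at the same estimate.
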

\begin{proof}
    We show this by induction.
    The base case holds by definition as $\xtil_1 = x_1 = \mathbf{0}$.
    Suppose that $\| x_t - \xtil_t \| \leq \rho$, then we have that
    \begin{align*}
        \| \xtil_{t+1} - x_{t+1} \| & = \| \xtil_{t+1} - (1 - \alpha)\xtil_{t+1} + (1- \alpha)\xtil_{t+1} - x_{t+1} \|\\
        & \leq \| \xtil_{t+1} - (1 - \alpha)\xtil_{t+1}\| + \| (1- \alpha)\xtil_{t+1} - x_{t+1} \|\\
        & \leq \| (1- \alpha)\xtil_{t+1} - x_{t+1} \| + \alpha D\\
        & = \| (1- \alpha) \xtil_{t+1} - (1- \alpha)(x_t + \gamma_t (\xtil_{t+1} - x_t) )\| + \alpha D \tag{a} \label{eqn:taga}\\
        & = (1- \alpha) (1 - \gamma_t) \| \xtil_{t+1} - x_t \| + \alpha D\\
        & \leq (1- \alpha) (1 - 1/\kappa) \| \xtil_{t+1} - x_t \| + \alpha D \tag{b} \label{eqn:tagb}\\
        & = (1- \alpha) (1 - 1/\kappa) \| \Pi_{\Yc_t^o}(\xtil_t - \eta \tilde{\nabla} f_t (x_t) ) - x_t \| + \alpha D\\
        & \leq (1- \alpha) (1 - 1/\kappa) \| \xtil_t - \eta \tilde{\nabla} f_t (x_t) - x_t \| + \alpha D \tag{c} \label{eqn:tagc}\\
        & \leq (1- \alpha) (1 - 1/\kappa) (\| \xtil_t - x_t \| + \eta \| \tilde{\nabla} f_t (x_t) \|) + \alpha D \tag{d} \label{eqn:tagd}\\
        & \leq (1 - 1/\kappa) (\| \xtil_t - x_t \| + \eta \| \tilde{\nabla} f_t (x_t) \|) + \alpha D \\
        & \leq (1 - 1/\kappa) (\rho + \eta d G) + \alpha D \tag{e} \label{eqn:tage}\\
        & \leq (1 - 1/\kappa) \left( \rho + \frac{1/\kappa}{2 d G(1-1/\kappa)} \rho d G \right) + \frac{1/\kappa}{2} \rho\\
        & = (1 - 1/\kappa) \rho + \frac{1/\kappa}{2} \rho + \frac{1/\kappa}{2} \rho = \rho
    \end{align*}
    where \eqref{eqn:taga} uses the update for $x_{t+1}$ in line \ref{lne:act_upd} of the algorithm, \eqref{eqn:tagb} follows from Lemma \ref{lem:gam}, \eqref{eqn:tagc} follows from the fact that $x_t \in \Yc \subseteq \Yc_t^o$ from Propositions \ref{prop:opt_pess} and \ref{prop:safe}, \eqref{eqn:tagd} is the triangle inequality and \eqref{eqn:tage} uses the induction hypothesis and that $\| \tilde{\nabla} f_t (x_t) \| \leq d G$ from Proposition \ref{prop:grad_est}.
\end{proof}

Lastly, we restate Lemma \ref{lem:lin_reg} and give the proof.

\begin{lemma}[Duplicate of Lemma \ref{lem:lin_reg}]
    \label{lem:lin_reg2}
    Let Assumptions \ref{ass:cost_funcs} and \ref{ass:const} hold.
    Then, for any $v \in \Yc$, it holds that
    \begin{equation*}
        \tilde{\nabla} f_t (x_t)^\top (\xtil_t - v) \leq \frac
        {1}{2 \eta} \left( \| \xtil_t - v \|^2 - \| \xtil_{t+1} - v \|^2 \right) + \frac{1}{2} \eta d^2 G^2,
    \end{equation*}
    for all $t \in [T]$.
\end{lemma}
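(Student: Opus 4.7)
The plan is to follow the standard potential-function (or ``regret-per-round'') argument for projected online gradient descent, adapted to the present situation where the projection is onto the optimistic set $\Yc_t^o$ rather than onto $\Yc$ itself, and the gradient used in the update is the estimated gradient $\tilde{\nabla} f_t(x_t)$. The only non-standard ingredient is observing that the comparator $v \in \Yc$ also lies inside $\Yc_t^o$, so that the non-expansive projection inequality can still be applied with $v$.

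First, I would invoke Proposition \ref{prop:opt_pess} to conclude $v \in \Yc \subseteq \Yc_t^o$. Then, using the update rule $\xtil_{t+1} = \Pi_{\Yc_t^o}(\xtil_t - \eta \tilde{\nabla} f_t(x_t))$ from line \ref{lne:opt_upd} together with the non-expansiveness of projection onto a closed convex set (stated in the preliminaries as $\|y - x\| \geq \|\Pi_\Xc(y) - x\|$ whenever $x \in \Xc$), I obtain
\begin{equation*}
    \| \xtil_{t+1} - v \|^2 \leq \| \xtil_t - \eta \tilde{\nabla} f_t(x_t) - v \|^2.
\end{equation*}

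Next, I would expand the right-hand side:
\begin{equation*}
    \| \xtil_t - \eta \tilde{\nabla} f_t(x_t) - v \|^2 = \| \xtil_t - v \|^2 - 2 \eta \tilde{\nabla} f_t(x_t)^\top (\xtil_t - v) + \eta^2 \| \tilde{\nabla} f_t(x_t) \|^2.
\end{equation*}
Rearranging the resulting inequality for the linear term gives
\begin{equation*}
    \tilde{\nabla} f_t(x_t)^\top (\xtil_t - v) \leq \frac{1}{2 \eta} \left( \| \xtil_t - v \|^2 - \| \xtil_{t+1} - v \|^2 \right) + \frac{\eta}{2} \| \tilde{\nabla} f_t(x_t) \|^2.
\end{equation*}

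Finally, I would apply the norm bound on the gradient estimator from Proposition \ref{prop:grad_est}, namely $\| \tilde{\nabla} f_t(x_t) \| \leq d G$, to replace $\| \tilde{\nabla} f_t(x_t) \|^2$ with $d^2 G^2$ and conclude the desired bound. There is no real obstacle here; the argument is essentially a textbook step for projected OGD, with the only subtlety being the use of Proposition \ref{prop:opt_pess} to justify applying the non-expansive projection inequality with the feasible comparator $v$.
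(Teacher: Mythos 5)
Your proposal is correct and follows essentially the same argument as the paper: use $v \in \Yc \subseteq \Yc_t^o$ (Proposition \ref{prop:opt_pess}), apply the non-expansiveness of $\Pi_{\Yc_t^o}$ to the update $\xtil_{t+1} = \Pi_{\Yc_t^o}(\xtil_t - \eta \tilde{\nabla} f_t(x_t))$, expand the square, bound $\| \tilde{\nabla} f_t(x_t) \|^2 \leq d^2 G^2$ via Proposition \ref{prop:grad_est}, and rearrange. No gaps.
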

\begin{proof}
    Because $v \in \Yc \subseteq \Yc_t^o$, we know that
    \begin{align*}
        & \| \xtil_{t+1} - v \|^2\\
        & = \| \Pi_{\Yc_t^o}(\xtil_t - \eta \tilde{\nabla} f_t (x_t) ) - v \|^2\\
        & \leq \| \xtil_t - \eta \tilde{\nabla} f_t (x_t) - v \|^2\\
        & = \| \xtil_t - v \|^2 - 2 \eta \tilde{\nabla} f_t (x_t)^\top(\xtil_t - v) + \eta^2 \| \tilde{\nabla} f_t (x_t) \|^2\\
        & \leq \| \xtil_t - v \|^2 - 2 \eta \tilde{\nabla} f_t (x_t)^\top(\xtil_t - v) + \eta^2 d^2 G^2.
    \end{align*}
    The proof is complete by rearranging and dividing by $2 \eta$.
\end{proof}

\subsection{Proof of Theorem \ref{thm:main}}
\label{apx:thm}

\begin{theorem}[Duplicate of Theorem \ref{thm:main}]
    \label{thm:main2}
    Let Assumptions \ref{ass:cost_funcs}, \ref{ass:set_bound}, \ref{ass:const} and \ref{ass:slaters} hold.
    If $\alpha = \min(d G (1 - 1/\kappa) \eta /D, 1/2)$ and 
    \begin{equation*}
        \delta = \min\left(\frac{1}{\left( \frac{1}{2} \sqrt{d} L D + G \right) T}, \frac{2 (\kappa - 1) \alpha \epsilon}{(\kappa + 1) \sqrt{d} L D}, \alpha r \right),
    \end{equation*}
    then all actions chosen by MP-ROGD (Algorithm \ref{alg:main_alg}) are within the constraints, and the regret  satisfies
    \begin{equation*}
        R_T \leq 2 d^2 G^2 \left( \kappa - \frac{3}{4} \right) \eta T  + \frac{ D^2}{2 \eta} + 1.
    \end{equation*}
    Furthermore, choosing $\eta = \frac{D}{2 \sqrt{\left( d/4 +  \kappa - 1 \right) d G^2 T}}$ ensures that 
    \begin{equation*}
        R_T \leq 2 D G \sqrt{d \left( \frac{1}{4} d +  \kappa - 1 \right) T} + 1.
    \end{equation*}
\end{theorem}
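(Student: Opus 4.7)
The plan is to combine the supporting lemmas with the standard OCO regret decomposition, while carefully tracking how the conditions on $\delta$, $\alpha$ and $\eta$ interact. First I would verify that the assumptions of Propositions~\ref{prop:viab} and~\ref{prop:safe} are satisfied under the stated choice of $\delta$ (this is immediate since $\delta$ is taken as the minimum of three quantities, two of which are the required safety bounds), so that $x_t \in \Yc_t^p$ for all $t$ and hence every played point $x_t + \delta e_i$ lies in $\Yc$. This settles the safety portion of the statement.

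For the regret, I would follow the proof sketch and split $R_T = \ti + \tii$, where $\tii$ is the averaging overhead. Using Assumption~\ref{ass:cost_funcs} termwise gives $\tii \leq G \delta T$. Then I would decompose $\ti$ by convexity into $\tia + \tib$ as in \eqref{eqn:ti}. For $\tib$, I would insert $\pm \tilde{\nabla} f_t(x_t)$ to write
\begin{equation*}
\tib = \sum_{t=1}^T \tilde{\nabla} f_t(x_t)^\top(\xtil_t - x_*) + \sum_{t=1}^T (\nabla f_t(x_t) - \tilde{\nabla} f_t(x_t))^\top (\xtil_t - x_*),
\end{equation*}
bound the first sum by telescoping via Lemma~\ref{lem:lin_reg} with $v = x_*$ (yielding $\tfrac{D^2}{2\eta} + \tfrac{1}{2}\eta d^2 G^2 T$), and bound the second by Cauchy--Schwarz together with the gradient-estimation error from Proposition~\ref{prop:grad_est} and Assumption~\ref{ass:set_bound}, giving at most $\tfrac{1}{2}\sqrt{d}L\delta DT$. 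The additive error terms satisfy $G \delta T + \tfrac{1}{2}\sqrt{d}L\delta DT \leq 1$ by the first term in the minimum defining $\delta$, which accounts for the constant~$1$ in the final bound.

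For $\tia$ I would invoke Lemma~\ref{lem:dist}. The key bookkeeping step is to select $\rho$ so that both sufficient conditions on $\eta$ and $\alpha$ in that lemma are tight; setting $\rho = 2dG(\kappa-1)\eta$ gives $\eta = \tfrac{\rho}{2dG(\kappa-1)}$, and the choice $\alpha = \tfrac{dG}{D}(1-\tfrac{1}{\kappa})\eta = \tfrac{dG(\kappa-1)\eta}{D\kappa}$ yields $2D\kappa \alpha = \rho$, so both conditions are met simultaneously (the clamp $\alpha \leq 1/2$ is only a safeguard when $\eta$ is large, and the condition on $\delta$ is the second term in the minimum). Cauchy--Schwarz with $\|\nabla f_t(x_t)\| \leq G$ then gives $\tia \leq GT\rho = 2dG^2(\kappa-1)\eta T$.

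Collecting all pieces produces
\begin{equation*}
R_T \leq 2dG^2(\kappa-1)\eta T + \tfrac{1}{2} d^2 G^2 \eta T + \tfrac{D^2}{2\eta} + 1 = 2dG^2\bigl(\tfrac{d}{4} + \kappa - 1\bigr)\eta T + \tfrac{D^2}{2\eta} + 1,
\end{equation*}
and minimizing this standard $a\eta + b/\eta$ form at $\eta = \tfrac{D}{2\sqrt{(d/4 + \kappa-1)d G^2 T}}$ yields the claimed $2DG\sqrt{d(d/4+\kappa-1)T} + 1$ bound. The main obstacle I anticipate is precisely the coupled choice of $\rho$, $\alpha$ and $\eta$ described above: Lemma~\ref{lem:dist} imposes one condition on $\eta$ and another on $\alpha$ in terms of the same slack $\rho$, so one must set $\alpha$ linearly in $\eta$ with the exact constant $\tfrac{dG}{D}(1-\tfrac{1}{\kappa})$ to make both inequalities hold at equality; any weaker choice would blow up either $\tia$ or the feasibility scaling, and the subsequent condition on $\delta$ must also be kept consistent with Lemmas~\ref{lem:gam} and~\ref{lem:dist}. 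Everything else is routine Cauchy--Schwarz, telescoping, and AM-GM optimization of $\eta$.
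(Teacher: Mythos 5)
Your proposal is correct and follows essentially the same route as the paper's proof: the same split into the averaging overhead and the linearized regret, Lemma~\ref{lem:lin_reg} with $v = x_*$ for the telescoping term, Lemma~\ref{lem:dist} with $\rho = 2dG(\kappa-1)\eta$ (which the stated $\alpha$ and $\delta$ satisfy exactly as you note), Proposition~\ref{prop:grad_est} for the gradient-estimation error absorbed into the additive $1$ via the first term in the definition of $\delta$, and the standard optimization of $\eta$. The safety claim is handled the same way, since the second and third terms of the $\min$ defining $\delta$ imply the hypotheses of Propositions~\ref{prop:viab} and~\ref{prop:safe}.
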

\begin{proof}
    Since $x_*$ is in $\Yc$, we can use Lemma \ref{lem:lin_reg} with $v \leftarrow x_*$ and sum over $t$ to get
    \begin{equation}
        \label{eqn:opt_act}
        \begin{split}
        & \sum_{t=1}^T \tilde{\nabla} f_t (x_t)^\top (\xtil_t - x_*)\\
        & \leq  \frac{1}{2 \eta} \sum_{t=1}^T (\| \xtil_t - x_* \|^2 - \| \xtil_{t+1} - x_* \|^2) + \frac{1}{2} d^2 G^2 \eta T \\
        & = \frac{1}{2 \eta} (\| \xtil_1 - x_* \|^2 - \| \xtil_{T+1} - x_* \|^2) + \frac{1}{2} d^2 G^2 \eta T \\
        & \leq \frac{1}{2 \eta} D^2 + \frac{1}{2} d^2 G^2 \eta T.
        \end{split}
    \end{equation}
    Also, note that
    \begin{equation}
        \label{eqn:prev1}
        \begin{split}
            \frac{1}{k} \sum_{i=1}^{k} f_t(x_{t,i}) & = \frac{1}{d+1} \left( f_t(x_t) + \sum_{i=1}^{d} f_t(x_t + \delta e_i) \right)\\
            & \leq \frac{1}{d+1} \left( f_t(x_t) + \sum_{i=1}^{d} \left( f_t(x_t) + |f_t(x_t + \delta e_i) - f_t(x_t) | \right) \right)\\
            & \leq \frac{1}{d+1} \left( f_t(x_t) + \sum_{i=1}^{d} \left( f_t(x_t) + G \delta \right) \right)\\
            & = f_t(x_t) + \frac{d}{d + 1} G \delta \leq f_t(x_t) + G \delta.
        \end{split}
    \end{equation}
    Then, we can bound the regret directly as
    \begin{align*}
        R_T & = \sum_{t=1}^T \left( \frac{1}{k} \sum_{i=1}^{k} f_t(x_{t,i}) - f_t (x_*) \right) \\
        & \leq \sum_{t=1}^T \left( f_t(x_t) - f_t (x_*) \right) + \delta G T \tag{a} \label{eqn:prev}\\
        & \leq \sum_{t=1}^T \nabla f_t(x_t) ^\top (x_t - x_*) + \delta G T \tag{b} \label{eqn:conv} \\
        & = \sum_{t=1}^T \nabla f_t(x_t) ^\top (\xtil_t - x_*) + \sum_{t=1}^T \nabla f_t(x_t) ^\top (x_t - \xtil_t) + \delta G T\\
        & \leq \sum_{t=1}^T \nabla f_t(x_t) ^\top (\xtil_t - x_*) + 2 (\kappa - 1) d G^2 \eta T + \delta G T \tag{c} \label{eqn:cauchs}\\
        & \leq \sum_{t=1}^T \tilde{\nabla} f_t(x_t) ^\top (\xtil_t - x_*) + \sum_{t=1}^T (\nabla f_t (x_t) - \tilde{\nabla} f_t(x_t)) ^\top (\xtil_t - x_*) + 2 (\kappa - 1) d G^2 \eta T + \delta G T \\
        & \leq \sum_{t=1}^T \tilde{\nabla} f_t(x_t) ^\top (\xtil_t - x_*) + 2 (\kappa - 1) d G^2 \eta T + \left( \frac{1}{2} \sqrt{d} L D + G \right) T \delta \tag{d} \label{eqn:grad_error}\\
        & \leq \frac{ D^2}{2 \eta} + \frac{1}{2} d^2 G^2 \eta T + 2 (\kappa - 1) d G^2 \eta T  + \left( \frac{1}{2} \sqrt{d} L D + G \right) T \delta \tag{e} \label{eqn:all_tog}\\
        & = \frac{ D^2}{2 \eta} + 2 \left(\frac{1}{4} d + \kappa - 1 \right) d G^2 \eta T  + \left( \frac{1}{2} \sqrt{d} L D + G \right) T \delta \\
        & \leq 2 D G \sqrt{d \left( \frac{1}{4} d +  \kappa - 1 \right) T} + 1 \tag{f} \label{eqn:step_sze},
    \end{align*}
    where \eqref{eqn:prev} is due to \eqref{eqn:prev1}, \eqref{eqn:conv} is due to the convexity of $f_t$, \eqref{eqn:cauchs} is from applying Lemma \ref{lem:dist} with $\rho = 2(\kappa - 1) d G \eta$ and note that the conditions on $\alpha$ are $\delta$ are satisfied by specification, \eqref{eqn:grad_error} follows from applying Proposition \ref{prop:grad_est}, \eqref{eqn:all_tog} follows from Lemma \ref{lem:lin_reg}, and \eqref{eqn:step_sze} uses the choice of step size $\eta = \frac{D}{2 \sqrt{\left( d/4 +  \kappa - 1 \right) d G^2 T}}$ and gradient estimator radius $\delta \leq \frac{1}{\left( \frac{1}{2} \sqrt{d} L D + G \right) T}$.
\end{proof}

\end{document}